\newcommand{\Rbb}{\mathbb{R}}
\newcommand{\Tbb}{\mathbb{T}}
\newcommand{\Ncal}{\mathcal{N}}
\newcommand{\Xcal}{\mathcal{X}}
\newcommand{\Ical}{\mathcal{I}}
\newcommand{\eg}{e.g.~}
\newcommand{\ie}{i.e.~}
\newcommand{\pderiv}[2]{\frac{\partial #1}{\partial #2}}
\newcommand{\diff}{\,\text{d}}
\DeclareMathOperator{\diag}{diag}
\DeclareMathOperator{\gp}{GP}
\DeclareMathOperator*{\argmax}{arg\,max}
\DeclareMathOperator{\blockdiagonal}{blockdiag}
\crefname{appendix}{Supplement}{Supplements}
\newtheorem{proposition}{Proposition}
\newtheorem{assumption}[proposition]{Assumption}
\theoremstyle{remark}
\newtheorem{example}[proposition]{Example}
\begin{document}

% If your paper is accepted and the title of your paper is very long,
% the style will print as headings an error message. Use the following
% command to supply a shorter title of your paper so that it can be
% used as headings.
%
%\runningtitle{I use this title instead because the last one was very long}

% If your paper is accepted and the number of authors is large, the
% style will print as headings an error message. Use the following
% command to supply a shorter version of the authors names so that
% they can be used as headings (for example, use only the surnames)
%
\runningauthor{Nicholas Krämer, Nathanael Bosch, Jonathan Schmidt, Philipp Hennig}

\twocolumn[

    \aistatstitle{Probabilistic ODE Solutions in Millions of Dimensions}

    \aistatsauthor{
        Nicholas Krämer\textsuperscript{*,1}
        \And
        Nathanael Bosch\textsuperscript{*,1}
        \And
        Jonathan Schmidt\textsuperscript{*,1}
        \And
        Philipp Hennig\textsuperscript{1,2}
    }

    \aistatsaddress{
        \textsuperscript{1}University of Tübingen\\
        \textsuperscript{2}Max Planck Institute for Intelligent Systems, Tübingen, Germany
    }
]

\begin{abstract}
    Probabilistic solvers for ordinary differential equations (ODEs) have emerged as an efficient framework for uncertainty quantification and inference on dynamical systems.
    In this work, we explain the mathematical assumptions and detailed implementation schemes behind solving {high-dimensional} ODEs with a probabilistic numerical algorithm.
    This has not been possible before due to matrix-matrix operations in each solver step, but is crucial for scientifically relevant problems---most importantly, the solution of discretised {partial} differential equations.
    In a nutshell, efficient high-dimensional probabilistic ODE solutions build either on independence assumptions or on Kronecker structure in the prior model.
    We evaluate the resulting efficiency on a range of problems, including the probabilistic numerical simulation of a differential equation with millions of dimensions.
\end{abstract}

\section{INTRODUCTION}

\begin{figure*}[ht]
    \begin{center}
        \includegraphics{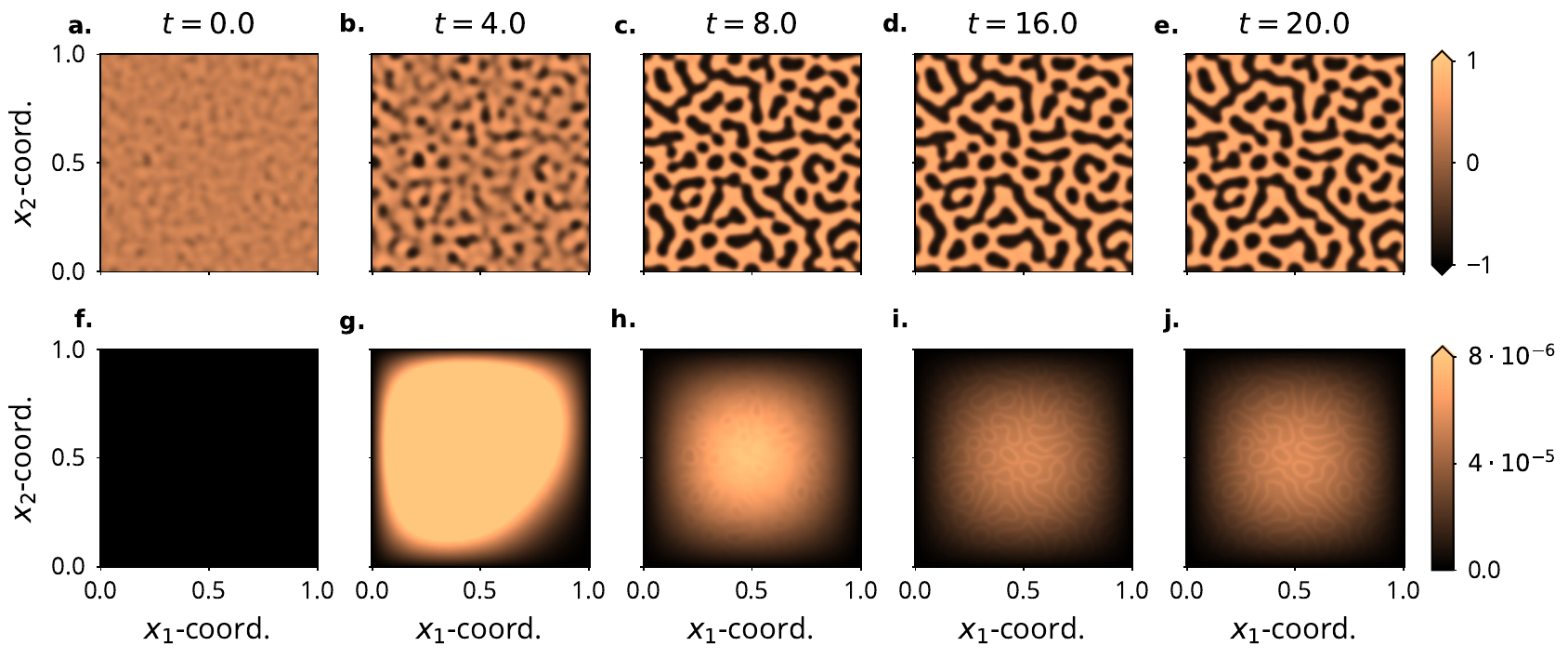}
    \end{center}
    \caption{\textit{Simulating a high-dimensional ODE:} Probabilistic solution of a discretised FitzHugh-Nagumo PDE model \citep{ambrosio2009propagation}. Means (a-e) and standard-deviations (f-j), $t_0=0$ (left) to $t_\text{max}=20$ (right). The patterns in the uncertainties match those in the solution. The simulated ODE is 125k-dimensional.}
    \label{fig:figure_1_pde}
\end{figure*}

\paragraph{Problem Statement}
\label{sec:introduction}
This paper discusses a class of algorithms that computes the solution of initial value problems based on ordinary differential equations (ODEs), \ie finding a function $y$ that satisfies
\begin{align}\label{eq:ivp}
    \dot y(t) = f(y(t), t),
\end{align}
for all $ t \in [t_0, t_\text{max}]$,
as well as the initial condition $y(t_0) = y_0 \in \Rbb^d$.
Usually, $f$ is non-linear, in which case the solution of \cref{eq:ivp} cannot generally be derived in closed form and has to be approximated numerically.
We continue the work of \emph{probabilistic} numerical algorithms for ODEs \citep{schober2019probabilistic,tronarp2019probabilistic,kersting2020convergence,tronarp2021bayesian,bosch2021calibrated,kramer2020stable}.
Like other filtering-based ODE solvers (``ODE filters''), the algorithm used herein translates numerical approximation of ODE solutions to a problem of probabilistic inference.
The resulting (approximate) posterior distribution quantifies the uncertainty associated with the unavoidable discretisation error \citep{bosch2021calibrated} and provides a language that integrates well with other data inference schemes \citep{kersting2020differentiable,schmidt2021probabilistic}.
The main difference to prior work is that we focus on the setting where the dimension $d$ of the ODE is high, that is, say, $d \gg 100$. (It is not clearly defined at which point an ODE counts as high-dimensional, but $d \approx 100$ is already a scale of problems in which previous state-of-the-art probabilistic ODE solvers faced computational challenges.)

\paragraph{Motivation And Impact}
High-dimensional ODEs describe the interaction of large networks of dynamical systems and appear in many disciplines in the natural sciences.
The perhaps most prominent example arises in the simulation of discretised partial differential equations.
There, the dimension of the ODE equals the number of grid points used to discretise the problem \citep[with \eg finite differences;][]{schiesser2012numerical}.
More recently, ODEs gained popularity in machine learning through the advent of neural ODEs \citep{chen2018neural}, continuous normalising flows \citep{grathwohl2018ffjord}, or physics-informed neural networks \citep{raissi2019physics}.
With the growing complexity of the model, each of the above can quickly become high-dimensional.
If such use cases shall gain from probabilistic solvers, fast algorithms for large ODE systems are crucial.

\paragraph{Prior Work And State-of-the-Art}
Many non-probabilistic ODE solvers, for example, explicit Runge--Kutta methods, have a computational complexity linear in the ODE dimension $d$ \citep{hairer1993solving}.
Explicit Runge--Kutta methods are often the default choices in ODE solver software packages.
Compared to the efficiency of the methods provided by DifferentialEquations.jl \citep{rackauckas2017differentialequations}, SciPy \citep{virtanen2020scipy}, or Matlab \citep{shampine1997matlab}, probabilistic methods have lacked behind so far.
Intuitively, ODE filters are a fusion of ODE solvers and Gaussian process models---two classes of algorithms that suffer from high dimensionality.
More precisely, the problem is that probabilistic solvers require matrix-matrix operations at each step.
The matrices have $O(d^2)$ entries, which leads to $O(d^3)$ complexity for a single solver step and has made the solution of high-dimensional ODEs impossible. ODE filters are essentially nonlinear, approximate Gaussian process inference schemes (with a lot of structure). As in the GP community \cite[e.g.][]{JMLR:v6:quinonero-candela05a}, the path to low computational cost in these models is via factorisation assumptions.

\paragraph{Contributions}
Our main contribution is to prove in which settings ODE filters admit an implementation in $O(d)$ complexity.
Thereby, they become a class of algorithms comparable to explicit Runge--Kutta methods not only in estimation performance \citep[error contraction as a function of evaluations of $f$;][]{kersting2020convergence,tronarp2021bayesian} but also in computational complexity (cost per evaluation of $f$).
The resulting algorithms deliver uncertainty quantification and other benefits of probabilistic ODE solvers on high-dimensional ODEs
(see \cref{fig:figure_1_pde}. The ODE from this figure will be explored in more detail in \cref{sec:experiments}).
% \ko{Ok my friends, I get thet you dont want to discuss figure 1 in details, buuuuut if the search function of this acrobat reader thing is correct, then you never even once explain in the text (apart from this a bit random reference in the intro. TBH, I  as a reader get very annoyed if there is never really given any context for the figure. And if I remember scrolling through the paper correctly, there are some work precision diagrams for exactly this data set?)}
% This figure position puts in on page 2, which is where it belongs (the reference to fig. 1 remains on page 2 as well)
The key novelties of the present work are threefold:
\begin{enumerate}
    \item
          \textit{Acceleration via independence:}
          A-priori, ODE filters commonly assume independent ODE dimensions \citep[e.g.][]{kersting2020convergence}.
          We single out those inference schemes that naturally preserve independence.
          Identification of independence-preserving ODE solvers is helpful because each ODE dimension can be updated separately.
          The performance implications are that a single matrix-matrix operation with $O(d^2)$ entries is replaced with $d$ matrix-matrix operations with $O(1)$ entries.
          In other words, $O(d)$ instead of $O(d^3)$ complexity for a single solver step.
          This is \cref{prop:complexity_diagonal_jacobian}.

    \item
          \textit{Calibration of multivariate output-scales:}
          A single ODE system often models the interaction between states that occur on different scales. It is useful to acknowledge differing output scales in the ``diffusivity'' of the prior (details below). We generalise the calibration result by \citet{bosch2021calibrated} to the class of solvers that preserve the independence of the dimensions.
          This is \cref{prop:vector-valued-time-varying-diffusion}.

    \item
          \textit{Acceleration via Kronecker structure:}
          Sometimes, prior independence assumptions may be too restrictive.
          For instance, one might have prior knowledge of correlations between ODE dimensions (\cref{ex:spatiotemporal-model} in \cref{sec:kronecker_structure_preserved}).
          Fortunately, a subset of probabilistic ODE solvers can exploit \emph{and preserve} Kronecker structure in the system matrices of the state space. Preserving the Kronecker structure brings over the performance gains from above to dependent priors. This is \cref{prop:complexity-kronecker}.
\end{enumerate}
Additional minor contributions are detailed where they occur.
To demonstrate the scalability of the resulting algorithm, the experiments in \cref{sec:experiments} showcase simulations of ODEs with dimension $d\sim 10^7$.

\section{ODE FILTER SETUP}
\label{sec:setup}

The following section details the technical setup of an ODE filter, including the prior (\cref{subsec:prior}) and information model (\cref{subsec:information_model}), as well as a selection of relevant practical considerations (\cref{subsec:practical_considerations}).

\subsection{Prior Model}
\label{subsec:prior}

The following is standard for probabilistic ODE solvers, and therefore essentially identical to the presentation by \eg \citet{schober2019probabilistic}.
Herein, however, we place a stronger emphasis on Kronecker- and independence-structures in the system matrices compared to prior work.
Both are important for the theoretical statements below.
\citet{sarkka2013bayesian} or \citet*{sarkka2019applied} provide a comprehensive explanation of the mathematical concepts regarding inference in state-space models.

\paragraph{Stochastic Process Prior On The ODE Solution}
Let \(Y := (Y^i)_{i=1}^d = \left(Y_0^i, \dots, Y_\nu^i \right)_{i=1}^d\) solve the linear, time-invariant stochastic differential equation (SDE)
\begin{align}\label{eq:prior_sde}
    \diff Y(t) = A Y(t) \diff t + B \diff W(t),
\end{align}
subject to a Gaussian initial condition
\begin{align}\label{eq:prior_initial_condition}
    Y(t_0) \sim \Ncal(m_0, C_0)
\end{align}
for some $m_0$ and $C_0 := \Gamma \otimes \breve{C}_0$.
The SDE is driven by a $d$-dimensional Wiener process $W$ with diffusion $\Gamma \in \Rbb^{d\times d}$ and governed by the system matrices
\begin{align}\label{eq:system_matrices}
    A := I_d \otimes \breve{A}, \quad \breve{A}:=\sum_{q=0}^{\nu-1} e_q e_{q+1}^\top, \quad
    B := I_d \otimes e_\nu,
\end{align}
where $e_q \in \Rbb^{\nu+1}$ is the $q$-th basis vector.
The zeroth component of $Y$, $(Y_0^i)_{i=1}^d$, is an integrated Wiener process.
With such $A$ and $B$, the $q$-th component $(Y_q^i)_{i=1}^d$ models the $q$-th derivative of the integrated Wiener process.
Similar SDEs can be written down for \eg the integrated Ornstein-Uhlenbeck process or the Mat\'ern process (the only differences would be additional non-zero entries in $A$).
If $\Gamma$ were diagonal, the Kronecker structure in $A$ and $B$ would imply prior pairwise independence between $Y^i$ and $Y^j$, $i \neq j$.
\cref{sec:independence_assumption} uses the diagonality assumption to reveal the efficient implementation of a class of ODE filters.
\cref{sec:kronecker_structure_preserved} allows $\Gamma$ to be any symmetric, positive definite matrix, which is why we do not make strong assumptions on $\Gamma$ yet.

\paragraph{Discretisation}

Let $\Tbb=(t_0, ..., t_N)$ be some time-grid with step-size $h_n := t_{n+1} - t_n$.
While for the presentation, we assume a fixed grid, practical implementations choose $t_n$ adaptively.
Reduced to $\Tbb$, due to the Markov property, the process $Y$ becomes
\begin{align} \label{eq:equivalent_discretisation}
    Y(t_{n+1}) \mid Y(t_n) \sim \Ncal(\Phi(h_n) Y_n, \Sigma(h_n))
\end{align}
for matrices $\Phi(h_n)$ and $\Sigma(h_n)$, which are defined as
\begin{subequations}\label{eq:phi_and_sigma}
    \begin{align}
        \Phi(h_n)   & = \exp(A h_n),                                                                  \\
        \Sigma(h_n) & = \int_0^{h_n} \Phi(h_n - \tau) B \Gamma B^\top\Phi(h_n - \tau)^\top\diff \tau.
    \end{align}
\end{subequations}
The definition of $\Phi(h_n)$ uses the matrix exponential.
$\Phi(h_n)$ inherits the block diagonal structure from $A$,
\begin{align}
    \Phi := I_d \otimes \breve{\Phi}(h_n), \quad \breve{\Phi}(h_n) = \exp(\breve{A} h_n),
\end{align}
and $\Sigma$ has a Kronecker factorisation similar to $C_0$,
\begin{subequations}
    \begin{align}
        \Sigma(h_n)
         & := \Gamma \otimes \breve{\Sigma}(h_n) ,                                                             \\
        \breve{\Sigma}(h_n)
         & := \int_0^{h_n} \breve{\Phi}(h_n - \tau) e_\nu e_\nu^\top \breve{\Phi}(h_n - \tau)^\top \diff \tau.
    \end{align}
\end{subequations}
The discretisation allows efficient extrapolation from $t_n$ to $t_{n+1}$.
Let $Y(t_n) \sim \Ncal(m_n, C_n)$.
Then,
\begin{align}\label{eq:predicted_rv}
    Y(t_{n+1}) \sim \Ncal(m_{n+1}^-, C_{n+1}^-)
\end{align}
with mean and covariance
\begin{subequations}\label{eq:predicted_mean_and_cov}
    \begin{align}
        m^-_{n+1} & = \Phi(h_n) m_n,                         \label{eq:extrapolate_mean}            \\
        C^-_{n+1} & = \Phi(h_n) C_n \Phi(h_n)^\top + \Sigma(h_n). \label{eq:extrapolate_covariance}
    \end{align}
\end{subequations}
For improved numerical stability, probabilistic ODE solvers compute this prediction in square root form, which means that only square root matrices of $C_n$ and $C_{n+1}^-$ are propagated without ever forming full covariance  matrices \citep{kramer2020stable,grewal2014kalman}.
\cref{sec:sqrt_implementation} recalls details about square root implementations of ODE filters.

\subsection{Information Model}
\label{subsec:information_model}

\paragraph{Information Operator}
The information operator
\begin{align}
    \Ical(Y)(t) := \dot Y(t) - f(Y(t), t),
\end{align}
known as the local defect \citep{gustafsson1992control},
captures ``how well (a sample from) $Y$ solves the given ODE''---if this value is large, the current state is an inaccurate approximation, and if it is small, $Y$ provides a good estimate of the truth.
The goal is to make the defect as small as possible over the entire time domain.

\paragraph{Artificial Data}
The local defect $\Ical$ can be kept small by conditioning $Y$ on $\Ical(Y)(t) \overset{!}{=}0$ on ``many'' grid-points. Due to the regular prior and the regularity-preserving information operator $\Ical$, conditioning the prior on a zero-defect leads to an accurate ODE solution \citep{tronarp2021bayesian}.
Altogether, the probabilistic ODE solver targets the posterior distribution
\begin{align} \label{eq:probabilistic_ode_solution}
    p\left(Y \,\left|\, \left\{\Ical(Y)(t_n) = 0\right\}_{n=0}^N, Y_0(t_0) = y_0\right.\right).
\end{align}
(Recall from \cref{eq:prior_sde} that lower indices in $Y$ refer to the derivative, \ie $Y_0$ is the integrated Wiener process, and $Y_q$ its $q$-th derivative.)
We call the posterior in \cref{eq:probabilistic_ode_solution} the \emph{probabilistic ODE solution}.
Unfortunately, a nonlinear vector field $f$ implies a nonlinear information operator $\Ical$.
Thus, the exact posterior is intractable.

\paragraph{Linearisation}
% \nath{There is no ``inference'' in this section, no?}
A tractable approximation of the probabilistic ODE solution is available through linearisation.
Linearising $f$ indirectly linearises $\Ical$, and the corresponding probabilistic ODE solution arises via Gaussian inference.
Let $F_y$ be (an approximation of) the Jacobian of $f$ with respect to $y$.
One can approximate the ODE vector field with a Taylor series
\begin{align}\label{eq:linearised_f}
    f(y) \approx \hat f_\xi(y) := f(\xi) + F_y(\xi)(y - \xi)
\end{align}
at some $\xi \in \Rbb^d$.
Let $E_q := I_d \otimes e_q$ be the projection matrix that extracts the $q$-th derivative from the full state $Y$.
In other words, $\dot Y_0 = E_1 Y$.
\cref{eq:linearised_f} implies a linearisation of $\Ical$ at some $\eta \in \Rbb^{d(\nu + 1)}$,
\begin{align}
    \Ical(Y)(t) \approx \hat \Ical_\eta(Y)(t) & := H(t) Y(t) + b(t),
\end{align}
with linearisation matrices
\begin{subequations}\label{eq:linearisation_matrices}
    \begin{align}
        H(t) & := E_1 - F_y(E_0 \eta, t) E_0,                 \\
        b(t) & := F_y(E_0 \eta, t) E_0 \eta - f(E_0 \eta, t).
    \end{align}
\end{subequations}
$\hat\Ical_\eta$ is linear in $Y(t)$.
Therefore, the approximate probabilistic ODE solution becomes tractable with Gaussian filtering and smoothing once $\hat\Ical_\eta$ is plugged into \cref{eq:probabilistic_ode_solution} \citep{sarkka2013bayesian,tronarp2019probabilistic}.
At $t_n$, $\eta$ is usually the predicted mean $m_n^-$, which yields the extended Kalman filter \citep{sarkka2013bayesian}.
For ODE filters, there are three relevant versions of $F_y$:
\begin{enumerate}
    \item \textit{EK0:} Use the zero-matrix to approximate the Jacobian, $F_y \equiv 0$, which has been a common choice since early work on ODE filters \citep{schober2019probabilistic,kersting2020convergence}, and implies a zeroth-order approximation of $f$ \citep{tronarp2019probabilistic}.
    \item \textit{EK1:} Use the full Jacobian $F_y = \nabla_y f$, which amounts to a first-order Taylor approximation of the ODE vector field \citep{tronarp2019probabilistic}. In its general form, the EK1 does not fit the assumptions made below and can thus does not immediately scale to high dimensions. Instead, we introduce the following variant:
    \item \textit{Diagonal EK1:}
          Use the diagonal of the full Jacobian, $F_y = \diag(\nabla_y f)$.
          This choice conserves the efficiency of the EK0 to a solver that uses Jacobian information.
          The diagonal EK1 is another minor contribution of the present work.
          The EK1 is more stable than the EK0 \citep{tronarp2019probabilistic}.
          \cref{subsec:stability-diagonal-ek1} empirically investigates how much stability using only the diagonal of the Jacobian provides.
\end{enumerate}

\paragraph{Measurement And Correction}
A probabilistic ODE solver step consists of an extrapolation, measurement, and correction phase.
Extrapolation has been explained in \cref{eq:predicted_rv,eq:predicted_mean_and_cov} above.
Denote
\begin{align}
    Y_{n+1}^- := Y(t_{n+1}) \sim \Ncal(m_{n+1}^-, C_{n+1}^-).
\end{align}
The measurement phase approximates
\begin{align}\label{eq:measured_rv}
    Z_{n+1} := \hat\Ical_{m_{n+1}^-}(Y_{n+1}^-)(t_{n+1}) \approx \Ical(Y_{n+1}^-)(t_{n+1})
\end{align}
by exploiting the linearisation matrices $H$ and $b$,
\begin{subequations}\label{eq:measured_rv_matrices}
    \begin{align}
        Z_{n+1} & \,\sim \Ncal(z_{n+1}, S_{n+1}),          \\
        z_{n+1} & := H(t_{n+1}) m_{n+1}^- + b(t_{n+1}),    \\
        S_{n+1} & := H(t_{n+1}) C_{n+1}^- H(t_{n+1})^\top.
    \end{align}
\end{subequations}
$Z_{n+1}$ will be used for calibration (details below).
The extrapolated random variable is then corrected as
\begin{subequations}\label{eq:correct_mean_and_cov}
    \begin{align}
        Y_{n+1} & \,\sim \Ncal(m_{n+1}, C_{n+1}),                                                               \\
        m_{n+1} & := m_{n+1}^- - C_{n+1}^- H(t_{n+1})^\top S_{n+1}^{-1} z_{n+1},     \label{eq:mean_correction} \\
        C_{n+1} & := \Xi \,C_{n+1}^-\,\Xi ^\top,              \label{eq:joseph_update1}                         \\
        \Xi     & := I -  C_{n+1}^- H(t_{n+1})^\top S_{n+1}^{-1}H(t_{n+1}).\label{eq:joseph_update2}
    \end{align}
\end{subequations}
The update in \cref{eq:joseph_update1,eq:joseph_update2} is the Joseph update \citep{bar2004estimation}. In practice, we never form the full $C_{n+1}$ but compute only the square root matrix by applying $\Xi$ to the square root matrix of $C_{n+1}^-$.
It is not a Cholesky factor (because it is not lower triangular), but generic square root matrices suffice for numerically stable implementation of probabilistic ODE solvers \citep{kramer2020stable}.

\subsection{Practical Considerations}
\label{subsec:practical_considerations}
Let us conclude with brief pointers to further practical considerations that are important for efficient probabilistic ODE solutions.
\begin{itemize}
    \item
          \textit{Initialisation:}
          The ODE filter state models a stack of a state and the first $\nu$ derivatives. The stability of the probabilistic ODE solver depends on the accurate initialisation of all derivatives. The current state of the art is to use Taylor-mode automatic differentiation \citep{kramer2020stable,griewank2008evaluating}, whose complexity scales exponentially with the dimension of the ODE.
          Instead, we initialise the solver by inferring
          \begin{align}
              p(Y(t_0) \mid Y_0(\tau_m) = \hat y(\tau_m), m=0, ..., \nu)
          \end{align}
          on $\nu+1$ small steps $\tau_0, ..., \tau_{\nu}$ where the $\hat y(\tau_m)$ are computed with \eg a Runge--Kutta method. This is a slight generalisation of the strategy used by \citet{schober2019probabilistic} (also refer to \citet{schober2014probabilistic,gear1980runge}), in the sense that we formulate this initialisation as probabilistic inference instead of setting the first few means manually.
    \item \textit{Error estimation:} Comprehensive explanation of error estimation and step-size adaptation is out of scope for the present work; we refer the reader to \citet{schober2019probabilistic} and \citet{bosch2021calibrated}.
\end{itemize}

\section{INDEPENDENT PRIOR MODELS ACCELERATE ODE SOLVERS}
\label{sec:independence_assumption}

This section establishes the main idea of the present work: \emph{probabilistic ODE solvers are fast and efficient when the prior models each dimension independently.}

\subsection{Assumptions}
Independent dimensions stem from a diagonal $\Gamma$.

\begin{assumption}\label{ass:diagonal_gamma}
    Assume that the diffusion $\Gamma$ of the Wiener process in \cref{eq:prior_sde} is a diagonal matrix.
\end{assumption}

\cref{ass:diagonal_gamma} implies that the initial covariance $C_0$ (\cref{eq:prior_initial_condition}) is the Kronecker product of a diagonal matrix with another matrix, thus block diagonal.
\cref{ass:diagonal_gamma} is not very restrictive; in prior work on ODE filters, $\Gamma$ was always either  $\Gamma = \gamma^2 I$ for some $\gamma > 0$ \citep{schober2019probabilistic,tronarp2019probabilistic,kersting2020convergence,bosch2021calibrated,tronarp2021bayesian,kramer2020stable}, or diagonal \citep{bosch2021calibrated}.

\subsection{Calibration}
Tuning the diffusion $\Gamma$ is crucial to obtain accurate posterior uncertainties.
As announced in \cref{sec:setup}, the mathematical assumptions for calibrating $\Gamma$ coincide with the assumptions that lead to an efficient ODE filter.
Thus, we discuss $\Gamma$ before proving the linear complexity of probabilistic solvers under \cref{ass:diagonal_gamma}.

\paragraph{Four Approaches}
Recall the observed random variable $Z_n$ (\cref{eq:measured_rv}).
ODE filters calibrate $\Gamma$ with quasi-maximum-likelihood-estimation (quasi-MLE): Consider the prediction error decomposition \citep{schweppe1965evaluation},
\begin{subequations}
    \begin{align}
        p(\{Z_n\}_{n=0}^N)
         & = p(Z_0) \prod_{n=0}^{N-1} p(Z_{n+1} \mid Z_n)                                                \\
         & \approx \Ncal(z_0, S_0) \prod_{n=0}^{N-1} \Ncal(z_{n+1}, S_{n+1}) \label{eq:approx_evidence}.
    \end{align}
\end{subequations}
$\Gamma$ is a quasi-MLE if it maximises \cref{eq:approx_evidence}.
The specific choice of calibration depends on respective model for $\Gamma$, and reduces to one of four approaches: on the one hand, fixing and calibrating a \emph{time-constant} $\Gamma$ versus allowing a \emph{time-varying} $\Gamma$; on the other hand, choosing a \emph{scalar} diffusion $\Gamma=\gamma^2 I$ versus choosing a \emph{vector-valued} diffusion $\Gamma=\diag(\gamma^1, ..., \gamma^d)$.
Roughly speaking, a time-varying, vector-valued diffusion allows for the greatest flexibility in the probabilistic model.
One contribution of the present work is to extend the vector-valued diffusion results by \citet{bosch2021calibrated} to a slightly broader class of solvers (\cref{prop:vector-valued-time-varying-diffusion} below).
Scalar diffusion will reappear in \cref{sec:kronecker_structure_preserved} below. Time-constant diffusion is addressed in \cref{sec:independence-time-constant-diffusion}.

\paragraph{Time-Varying Diffusion}

Allowing $\Gamma$ to change over the time-steps, all measurements before time $t_n$ are independent of $\Gamma_n$.
Under the assumption of an error-free previous state (which is common for hyperparameter calibration in ODE solvers), a local quasi-MLE for $\Gamma_n = \gamma_n^2 I$ arises as \citep{schober2019probabilistic}
\begin{align}\label{eq:time-varying-scalar-diffusion}
    \hat \gamma_n^2 := \frac{1}{d} z_n \left[H(t_n) \Sigma(h_n) H(t_n)^\top\right]^{-1} z_n.
\end{align}
This can be extended to a quasi-MLE for the EK0 with vector-valued $\Gamma_n = \diag(\gamma_n^1, ..., \gamma_n^d)$ \citep{bosch2021calibrated}
\begin{align}\label{eq:time-varying-vector-valued-diffusion}
    (\hat\gamma_n^i)^2 := (z_n^i)^2 / [H(t_n) \Sigma(h_n) H(t_n)^\top]_{ii},
\end{align}
for all $i=1,...,d$.
In this work, we generalise the EK0 requirement to \cref{ass:diagonal_gamma} and a diagonal Jacobian.

\begin{proposition}\label{prop:vector-valued-time-varying-diffusion}
    Under \cref{ass:diagonal_gamma} and for diagonal $F_y$, the estimators $(\hat\gamma_n^i)_i$ in \cref{eq:time-varying-vector-valued-diffusion} are quasi-MLEs.
\end{proposition}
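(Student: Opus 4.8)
The plan is to show that, under the two structural assumptions, the approximate log-evidence in \cref{eq:approx_evidence} decouples into $d$ independent scalar maximisation problems whose solutions are exactly \cref{eq:time-varying-vector-valued-diffusion}. Because the diffusion is time-varying, only the step-$n$ factor of the product in \cref{eq:approx_evidence}, namely $\Ncal(z_n, S_n)$, depends on the diffusion $\Gamma_n$ at that step; every earlier factor is independent of it. Invoking the standard error-free-previous-state assumption collapses the predicted covariance \cref{eq:extrapolate_covariance} to the process noise, so that the measurement covariance from \cref{eq:measured_rv_matrices} reduces to $S_n = H(t_n)\,\Sigma(h_n)\,H(t_n)^\top$ with $\Sigma(h_n) = \Gamma_n \otimes \breve\Sigma(h_n)$. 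The whole argument therefore rests on the structure of this matrix.

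The key step is to prove that $S_n$ is diagonal. First I would rewrite the linearisation matrix from \cref{eq:linearisation_matrices} via the Kronecker mixed-product rule as $H(t_n) = I_d\otimes e_1^\top - F_y\otimes e_0^\top$; because $F_y$ is diagonal, this is block diagonal across the $d$ ODE dimensions, with $i$-th block $\breve H_i := e_1^\top - (F_y)_{ii}\,e_0^\top \in \Rbb^{1\times(\nu+1)}$. \cref{ass:diagonal_gamma} makes $\Sigma(h_n) = \diag(\gamma_n^1,\dots,\gamma_n^d)\otimes\breve\Sigma(h_n)$ block diagonal as well, with $i$-th block $\gamma_n^i\,\breve\Sigma(h_n)$. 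Sandwiching the block-diagonal $\Sigma(h_n)$ between $H(t_n)$ and $H(t_n)^\top$ then keeps everything block diagonal, and since projection by $\breve H_i$ reduces each dimension to a scalar, the product $S_n$ is a genuine $d\times d$ diagonal matrix with entries $(S_n)_{ii} = \gamma_n^i\,\breve H_i\,\breve\Sigma(h_n)\,\breve H_i^\top = \gamma_n^i\,[H(t_n)\,(I_d\otimes\breve\Sigma(h_n))\,H(t_n)^\top]_{ii}$. This is precisely where the diagonality of $F_y$ is indispensable: a dense Jacobian would produce off-diagonal blocks $e_0^\top$ in $H$ and hence a dense $S_n$, coupling the dimensions and destroying the factorisation.

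Given a diagonal $S_n$, the Gaussian likelihood factorises as $\Ncal(z_n, S_n) = \prod_{i=1}^d \Ncal(z_n^i, (S_n)_{ii})$ (adopting the mean/covariance argument convention of \cref{eq:approx_evidence}), and each factor depends on exactly one unknown $\gamma_n^i$, since $c_i := [H(t_n)(I_d\otimes\breve\Sigma(h_n))H(t_n)^\top]_{ii}$ is fixed. Maximising the log-evidence therefore splits into $d$ one-dimensional problems over $\gamma_n^i$. Differentiating $-\tfrac12\log(2\pi\gamma_n^i c_i) - (z_n^i)^2/(2\gamma_n^i c_i)$ and setting the derivative to zero yields the textbook scale estimator $\hat\gamma_n^i = (z_n^i)^2/c_i$, which is exactly \cref{eq:time-varying-vector-valued-diffusion}; the second-order condition confirms it is a maximiser.

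The main obstacle is the bookkeeping in the block-algebra that establishes diagonality of $S_n$: I would need to track carefully that the projection blocks $\breve H_i$ act only within their own dimension and that no cross terms survive, since this single fact is what turns a $d$-dimensional likelihood into $d$ separable scalar ones. The remaining ingredients---the time-wise decoupling of the time-varying diffusion and the one-dimensional scale MLE---are standard and mirror the scalar argument behind \cref{eq:time-varying-scalar-diffusion}; the only genuinely new content relative to \citet{bosch2021calibrated} is that the diagonal-Jacobian assumption, rather than the EK0's $F_y\equiv 0$, already suffices to keep $S_n$ diagonal.
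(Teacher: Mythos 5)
Your proposal is correct and follows essentially the same route as the paper's own argument: establish that the diagonal Jacobian makes $H(t_n)$ block diagonal and hence $S_n$ diagonal (the paper defers this to the proof of \cref{prop:complexity_diagonal_jacobian}), then note that the local Gaussian evidence factorises over coordinates and solve the resulting $d$ scalar scale-MLE problems by differentiation. The only difference is notational---your $\gamma_n^i$ plays the role of the paper's $(\gamma_n^i)^2$---which does not affect the argument.
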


\begin{proof}[Sketch of the proof.]
    Two ideas are relevant: (i) a diagonal Jacobian implies a block diagonal $H(t_n)$ and a diagonal $H(t_n) \Sigma(h_n) H(t_n)^\top$ (which will be proved formally in \cref{prop:complexity_diagonal_jacobian} below); (ii) the local evidence, \ie the probability of $\Ncal(z_n, S_n)$ being zero, decomposes into a sum over the coordinates.
    Maximising each summand with respect to $\gamma_n^i$ yields the claim.
\end{proof}
A very similar case can be made for time-constant diffusion (see  \cref{sec:independence-time-constant-diffusion}).
\cref{prop:vector-valued-time-varying-diffusion} is a generalisation of the results by \citet{bosch2021calibrated} in the sense that \cref{prop:vector-valued-time-varying-diffusion} is not restricted to the EK0.

% \paragraph{Order of extrapolation}
% \nico{condense this paragraph, dont remove it}
% Notably, $\hat\gamma_n$ and $\hat \gamma_n^i$ depend only on $m_n^-$, and not on $C_n^-$.
% Irrelevance of $C_n^-$ for the estimators is useful because it allows calibration of the diffusion in-between extrapolation of the mean and extrapolation of the covariance.
% The latter needs access to $\Gamma$, since the  process noise $\Sigma(h_n)$ depends on $\Gamma$.
% The whole procedure is summarized in \cref{alg:odefilter_step}.
% \begin{algorithm}
%     \caption{A single ODE filter step}\label{alg:odefilter_step}
%     \begin{algorithmic}[1]
%         % \Require $m_n$, $C_n$
%         % \Ensure $y = x^n$
%         \State Extrapolate mean \Comment{\cref{eq:extrapolate_mean}}
%         \State Evaluate ODE $\&$ assemble $H$ and $b$\Comment{\cref{eq:linearisation_matrices}}
%         \State Calibrate $\Gamma$\Comment{\cref{eq:time-varying-scalar-diffusion} or \cref{eq:time-varying-vector-valued-diffusion}}
%         \State Extrapolate covariance\Comment{\cref{eq:extrapolate_covariance}}
%         \State Measure\Comment{\cref{eq:measured_rv,eq:measured_rv_matrices}}
%         \State Correct mean and covariance\Comment{\cref{eq:correct_mean_and_cov}}
%     \end{algorithmic}
% \end{algorithm}
% \nico{I am a big fan of the algobox actually, can we keep it? :D}
% \nath{I think both are not too relevant and could be removed. Though I like the algobox visually ;)}

\subsection{Complexity}
Now, with calibration in place, we can discuss the computational complexity of ODE filters under \cref{ass:diagonal_gamma}.
The following proposition establishes that
for diagonal Jacobians, a single solver step costs $O(d)$.

\begin{proposition} \label{prop:complexity_diagonal_jacobian}
    Suppose that \cref{ass:diagonal_gamma} is in place.
    If the Jacobian of the ODE is (approximated as) a diagonal matrix, then a single step with a filtering-based probabilistic ODE solver costs $O(d \nu^3)$ in floating-point operations, and $O(d \nu^2)$ in memory.
\end{proposition}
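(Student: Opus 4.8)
The plan is to show that, under both hypotheses, every matrix appearing in a single filter step is \emph{block diagonal} with $d$ blocks of size at most $(\nu+1)\times(\nu+1)$, so that the entire step decouples across the $d$ ODE dimensions and reduces to $d$ independent operations on blocks of that size. First I would record the block structure of the fixed system matrices. By \cref{eq:system_matrices,eq:phi_and_sigma}, $\Phi(h_n)=I_d\otimes\breve\Phi(h_n)$ is block diagonal with identical blocks, and under \cref{ass:diagonal_gamma} the Kronecker factorisations $C_0=\Gamma\otimes\breve C_0$ and $\Sigma(h_n)=\Gamma\otimes\breve\Sigma(h_n)$ are block diagonal, the $i$-th block being $\Gamma_{ii}$ times the common factor. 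The step that actually uses the diagonal-Jacobian hypothesis is the block structure of $H(t_n)=E_1-F_y E_0$: since $E_q=I_d\otimes e_q$ selects the $q$-th derivative of each dimension and $F_y=\diag(F_y^1,\dots,F_y^d)$ is diagonal, $H$ splits into $d$ row blocks $\breve H_i = e_1^\top - F_y^i\, e_0^\top \in \Rbb^{1\times(\nu+1)}$.

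The core of the argument is then an induction on $n$ showing that block-diagonality is \emph{preserved} by the update, with base case $C_0$. For the inductive step I would verify, one equation at a time, that block-diagonal inputs yield block-diagonal outputs: the predicted covariance $C^-_{n+1}=\Phi C_n\Phi^\top+\Sigma$ (\cref{eq:extrapolate_covariance}) is a sum of block-diagonal matrices, hence block diagonal with $(\nu+1)\times(\nu+1)$ blocks; the innovation covariance $S_{n+1}=H C^-_{n+1} H^\top$ (\cref{eq:measured_rv_matrices}) is in fact \emph{diagonal}, because the $(i,j)$-block of the product equals $\breve H_i\,\breve C^{-}_{n+1,i}\,\breve H_j^\top\,\delta_{ij}$, a scalar on the diagonal; consequently $S_{n+1}^{-1}$ is diagonal and formed in $O(d)$; finally the gain factor $\Xi=I-C^-_{n+1}H^\top S_{n+1}^{-1}H$ (\cref{eq:joseph_update2}) and the Joseph update $C_{n+1}=\Xi C^-_{n+1}\Xi^\top$ (\cref{eq:joseph_update1}) are again block diagonal, closing the induction. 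The same reasoning shows the mean recursion (\cref{eq:extrapolate_mean,eq:mean_correction}) decouples into $d$ independent $(\nu+1)$-vector updates, and, because $\Xi$ acts block-wise, the square-root implementation of \cref{sec:sqrt_implementation} applies its factor update to $d$ independent $(\nu+1)$-sized square-root matrices rather than one $d(\nu+1)$-sized factor.

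With the structure established, the complexity count is routine. Each of the $d$ blocks requires a constant number of dense matrix--matrix products of $(\nu+1)\times(\nu+1)$ matrices for the prediction and Joseph steps, costing $O(\nu^3)$ per block, while the measurement, mean-correction, and diagonal $S_{n+1}^{-1}$ steps cost only $O(\nu^2)$ per block; summing over the $d$ decoupled blocks gives $O(d\nu^3)$ floating-point operations per step. For memory, one stores $d$ blocks of $(\nu+1)\times(\nu+1)$ (square-root) covariance factors, \ie $O(d\nu^2)$, and no full $d(\nu+1)\times d(\nu+1)$ matrix is ever formed.

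I expect the main obstacle to be not the arithmetic but the bookkeeping of the structure-preservation induction---in particular the claim that $S_{n+1}$ is \emph{genuinely diagonal} rather than merely block diagonal, since this is exactly what makes $S_{n+1}^{-1}$ and hence the whole correction phase cheap. This is the same fact invoked (and deferred to the present proposition) in the sketch of \cref{prop:vector-valued-time-varying-diffusion}, so care is warranted to state it once and cleanly. A secondary subtlety is confirming that the square-root variant inherits the decoupling, which follows because $\Xi$ is block diagonal and therefore never mixes dimensions.
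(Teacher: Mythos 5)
Your proposal is correct and takes essentially the same approach as the paper's proof: an induction showing that block diagonality of the covariances (and the diagonality of $S_{n+1}$, due to the block-diagonal $H$ with blocks $e_1^\top - F_y^i e_0^\top$) is preserved through extrapolation, measurement, and the Joseph correction, followed by a per-block cost count of $O(\nu^3)$ flops and $O(\nu^2)$ memory across $d$ decoupled blocks. The only element of the paper's proof you omit is the $O(d)$ cost of calibrating $\Gamma$ via the quasi-MLE (its step (iii)), but that follows immediately from the diagonality of $H(t_n)\Sigma(h_n)H(t_n)^\top$, which your argument already establishes.
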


\begin{proof}
    Let $Y_n \sim \Ncal(m_n, C_n)$. Assume that $C_n$ is block diagonal. We show that block diagonality is preserved through a step, and since by \cref{ass:diagonal_gamma}, $C_0$ is block diagonal, we do not lose generality.
    Recall $\Phi(h_n)$ and $\Sigma(h_n)$ from \cref{eq:equivalent_discretisation,eq:phi_and_sigma}.
    $\Phi(h_n)$ is block diagonal, and since $\Gamma_n$ is diagonal, $\Sigma(h_n)$ is block diagonal.

    \emph{(i) Extrapolate the mean:}
    The mean is extrapolated according to \cref{eq:extrapolate_mean}, which costs $O(d \nu^2)$, because of the block diagonal $\Phi(h_n)$. Each dimension is extrapolated independently.

    \emph{(ii) Evaluate the ODE:}
    Next, $H=H(t_{n+1})$ and $b=b(t_{n+1})$ from \cref{eq:linearisation_matrices} are assembled, which involves evaluating $f$ and $F_y$ at $\xi := E_0 m_{n+1}^-$ ($E_0$ is a projection matrix and can be implemented as array indexing, so $\xi$ comes at negligibily low cost).
    $F_y = \diag(F_y^1, ... F_y^d)$ is a diagonal matrix, therefore
    \begin{align}
        H = \blockdiagonal(H^1, ..., H^d)
    \end{align}
    is block diagonal with blocks
    \begin{align}
        H^i := e_1 - e_0 F_y^i, \quad i=1,...,d
    \end{align}
    (recall the basis vectors $e_q$ from \cref{eq:system_matrices}). The block diagonal $H$ has been pre-empted in \cref{prop:vector-valued-time-varying-diffusion} above.

    \emph{(iii) Calibrate $\Gamma$:}
    The cost of assembling the quasi-MLE for $\Gamma_{n+1}$ according to \cref{eq:time-varying-scalar-diffusion} or \cref{eq:time-varying-vector-valued-diffusion} is $O(d)$, because the matrix to be inverted is diagonal.

    \emph{(iv) Extrapolate the covariance:}
    The covariance can be extrapolated dimension-by-dimension as well, because $C_n$, $\Phi(h_n)$, and $\Sigma(h_n)$ are all block diagonal with the same block structure: $d$ square blocks with $\nu + 1$ rows and columns; recall \cref{eq:extrapolate_covariance}.
    In reality, the matrix-matrix multiplication is replaced by a QR decomposition; we refer to \cref{sec:prediction_step_sqrt} for details on square root implementation.
    Using either strategy---square root  or traditional implementation---extrapolating the covariance costs $O(d \nu^3)$ and $C_{n+1}^-$ is block diagonal.

    \emph{(v) Measure:}
    Computing the mean of $Z_{n+1}$ (recall \cref{eq:measured_rv,eq:measured_rv_matrices}) costs $O(d)$.
    The covariance $S_{n+1}$ of $Z_{n+1}$ is diagonal, since $H$ and $C_{n+1}^-$ are  block diagonal.
    Thus, assembling \emph{and inverting} $S_{n+1}$ costs $O(d)$.

    \emph{(vi) Correct mean and covariance:}
    The mean is corrected according to \cref{eq:mean_correction}, which---since $S_{n+1}$ is diagonal---costs $O(d \nu)$.
    The covariance is corrected according to \cref{eq:joseph_update1,eq:joseph_update2}, the complexity of which hinges on the structure of $\Xi$ (\cref{eq:joseph_update2}): due to the block diagonal $C_{n+1}^-$, $H$, and $S_{n+1}$, $\Xi$ is block diagonal again, and correcting the covariance costs $O(d \nu^3)$.
    The square root matrix of $C_{n+1}$ arises by multiplying $\Xi$ with the ``left'' square root matrix of $C_{n+1}^-$.
    The complexity remains the same (asymptotically, though QR decompositions cost more than matrix multiplications).

    All in all, ODE filter steps preserve block-diagonal structure in the covariances.
    The expensive phases are the covariance extrapolation and correction in $O(d \nu^3)$ floating-point operations. The maximum memory demand is $O(d \nu^2)$ for the block diagonal covariances.
\end{proof}

While it may seem restrictive at first to use only the diagonal of the Jacobian, \cref{prop:complexity_diagonal_jacobian} includes the EK0, one of the central ODE filters.
The $O(d)$ complexity puts the EK0 and the diagonal EK1 into the complexity class of explicit Runge--Kutta methods. Usually, $\nu < 12$ holds \citep{kramer2020stable}.

\section{EK0 PRESERVES KRONECKER STRUCTURE}
\label{sec:kronecker_structure_preserved}

% Move figure 2 to the same page as the discussion of experiment 1.
\begin{figure*}
    \begin{center}
        \includegraphics[width=\linewidth]{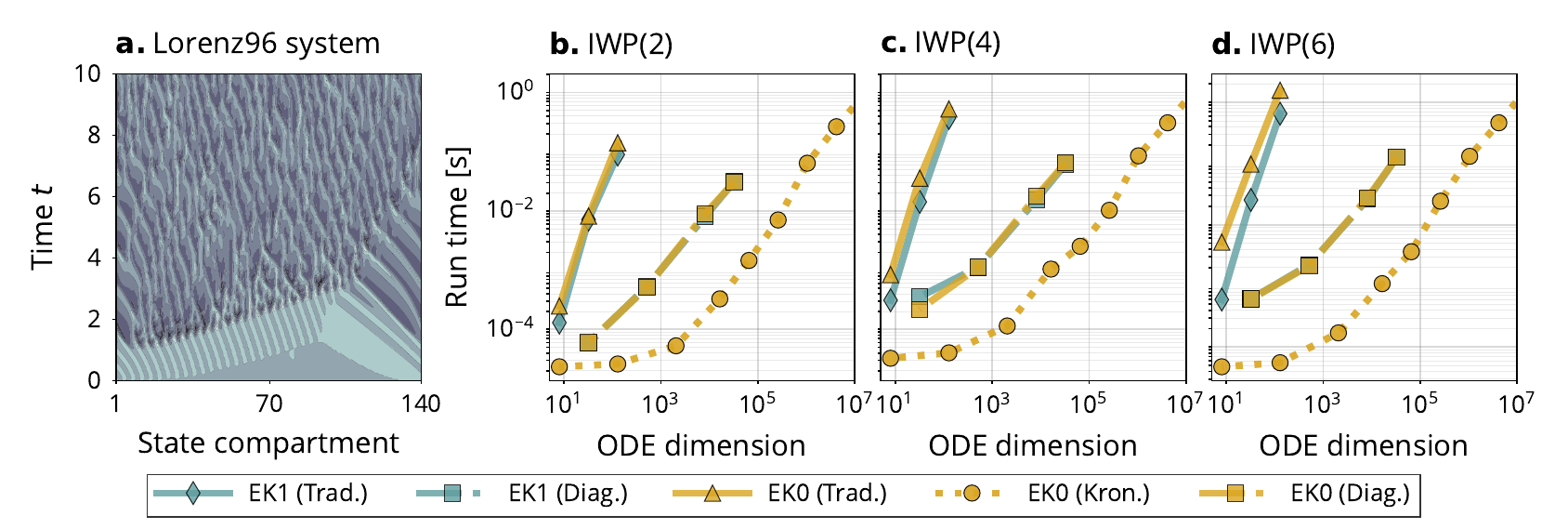}
        \caption{
            \textit{Runtime of a single ODE filter step:}
            Run time (wall-clock) of a single step of ODE filter variations on the Lorenz96 problem (a) for increasing ODE dimension and $\nu=2,4,6$ (b-d).
            The traditional implementations cost $O(d^3)$ per step, the diagonal EK1 and diagonal EK0 are $O(d)$ per step, just like the Kronecker version of the EK0.
            The Kronecker EK0 is significantly faster than the diagonal version(s).
        }
        \label{fig:1_sterilised_lorenz_attempt_step}
    \end{center}
\end{figure*}

As hinted in \cref{sec:introduction}, scalar or diagonal diffusion may be too restrictive in certain situations.

\begin{example}\label{ex:spatiotemporal-model}
    Consider a spatio-temporal Gaussian process model $u(t, x) \sim \gp(0, \gamma^2 k_t \otimes k_x)$, where $k_t$ is the covariance kernel that directly corresponds to an integrated Wiener process prior \citep{sarkka2019applied}.
    Such a spatiotemporal model could be a useful prior distribution for applying an ODE solver to problems that are discretised PDEs, because $k_x$ encodes spatial dependency structures. Restricted to a spatial grid $\Xcal := \{x_1, ..., x_G\}$, $y:= u(t, \Xcal)$ satisfies the prior model in \crefrange{eq:prior_sde}{eq:prior_initial_condition}\footnote{Technically, the stack of $y$ and its $\nu$ derivatives does.}, but with $\Gamma = \gamma^2 k_x(\Xcal, \Xcal)$ \citep{solin2016stochastic}, which is usually dense.
\end{example}

\subsection{Assumptions}
Despite the lack of independence in \cref{ex:spatiotemporal-model}, fast ODE solutions remain possible with the EK0.
In the remainder of this section, let $\Gamma = \gamma^2 \breve{\Gamma}$ for some matrix $\breve{\Gamma}$ and some scalar $\gamma$.
Calibrating the scalar $\gamma$ allows preserving Kronecker structure in the system matrices that appear in an ODE filter step (\cref{prop:complexity-kronecker} below).
\citet{tronarp2019probabilistic} show how for $\Gamma = \gamma^2 \breve{\Gamma}$, a time-constant quasi-MLE $\hat\gamma$ arises in closed form and also, that the posterior covariances all look like $C_n = \gamma^2 \breve{C}_n$: calibration can happen entirely post-hoc.

\paragraph{Constraints}
The following statement about linear complexity of ODE filters is only valid under two constraints: one can ignore (i) the quadratic costs of multiplying the posterior covariances with the quasi-MLE, and (ii) the cubic costs of solving a linear system involving $\Gamma$.
The system matrices $\Phi, \Sigma, C_0$ are all Kronecker products of a $\Rbb^{d \times d}$ (``left'') and a $\Rbb^{\nu \times \nu}$ factor (``right'').
The first constraint is thus avoided by scaling the ``right'' Kronecker factor of the covariances with $\gamma^2$  in $O(\nu^2)$.
(ODE filters preserve Kronecker structure; see below.)
The second one becomes the following assumption.

\begin{assumption}\label{ass:inverse-of-gamma-cheap}
    Assume that the inverse of $\Gamma$ is readily available and cheap to apply; that is, the quantity $x^\top \Gamma^{-1} x$ can be computed in $O(d)$.
\end{assumption}

Naturally, \cref{ass:inverse-of-gamma-cheap} holds for diagonal or at least sufficiently sparse matrices $\Gamma$.
There are also settings in which \cref{ass:inverse-of-gamma-cheap} holds even if $\Gamma$ is dense.
For instance, if $\Gamma$ is the covariance of a Gauss--Markov random field, the sparsity structure in $\Gamma^{-1}$ implies adjacency of grid nodes \citep{lindgren2011explicit,siden2020deep}. In \cref{ex:spatiotemporal-model} with a spatial Matern kernel, for example, inverse Gram matrices can be approximated efficiently using the stochastic partial differential equation formulation \citep{lindgren2011explicit}.

\subsection{Computational Complexity}

Under \cref{ass:inverse-of-gamma-cheap}, a single EK0 step costs $O(d)$:

\begin{proposition}\label{prop:complexity-kronecker}
    Under \cref{ass:inverse-of-gamma-cheap}, and if a time-constant diffusion model $\Gamma = \gamma^2 \breve{\Gamma}$ is calibrated via $\gamma$, a single step of the EK0 costs $O(\nu^3 + d \nu^2)$ floating point operations, and $O(d\nu + d^2 + \nu^2)$ memory.
\end{proposition}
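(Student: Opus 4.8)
The plan is to mirror the structure-preservation argument of \cref{prop:complexity_diagonal_jacobian}, but now tracking a genuine Kronecker factorisation with a \emph{dense} left factor $\Gamma$ instead of block diagonality. The linchpin is that the EK0 sets $F_y\equiv 0$, so the measurement matrix in \cref{eq:linearisation_matrices} is $H = E_1 = I_d\otimes e_1^\top$, a Kronecker product whose left factor is the identity. Together with $\Phi = I_d\otimes\breve\Phi$ and the Kronecker forms $\Sigma = \Gamma\otimes\breve\Sigma$, $C_0 = \Gamma\otimes\breve C_0$, this identity left factor is exactly what lets every matrix in a solver step keep the form $(\text{fixed }d\times d\text{ left factor})\otimes((\nu+1)\times(\nu+1)\text{ right factor})$, so that all covariance arithmetic collapses onto the small right factor.

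Concretely, I would prove by induction over the steps that if $C_n = \Gamma\otimes\breve C_n$, then the same shape, with \emph{unchanged} left factor $\Gamma$, survives extrapolation, measurement and correction. For the prediction \cref{eq:predicted_mean_and_cov}, $(I_d\otimes\breve\Phi)(\Gamma\otimes\breve C_n)(I_d\otimes\breve\Phi)^\top + \Gamma\otimes\breve\Sigma = \Gamma\otimes(\breve\Phi\breve C_n\breve\Phi^\top + \breve\Sigma)$, because the identity left factors of $\Phi$ leave $\Gamma$ untouched. Feeding this into \cref{eq:measured_rv_matrices} gives $S_{n+1} = H C_{n+1}^- H^\top = (e_1^\top\breve C_{n+1}^- e_1)\,\Gamma =: \breve s_{n+1}\Gamma$, a scalar multiple of $\Gamma$. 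The decisive cancellation occurs in the gain: $C_{n+1}^- H^\top = \Gamma\otimes(\breve C_{n+1}^- e_1)$ and $S_{n+1}^{-1} = \breve s_{n+1}^{-1}\Gamma^{-1}$ combine, via $(\Gamma\otimes\breve c)(\Gamma^{-1}\otimes\breve s_{n+1}^{-1}) = I_d\otimes(\breve s_{n+1}^{-1}\breve c)$, into a gain $K = I_d\otimes\breve k$ that is again Kronecker with identity left factor and from which $\Gamma$ has vanished entirely. Hence $\Xi = I - KH = I_d\otimes\breve\Xi$ in \cref{eq:joseph_update2}, and the Joseph update \cref{eq:joseph_update1} returns $C_{n+1} = \Gamma\otimes(\breve\Xi\breve C_{n+1}^-\breve\Xi^\top)$, closing the induction. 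I would note that the mean does \emph{not} factor (because $b$ in \cref{eq:linearisation_matrices} carries the nonlinear $f$), so it is carried as a full $d\times(\nu+1)$ array; extrapolation is the matrix product $M_n\breve\Phi^\top$, and the correction subtracts the outer product $z_{n+1}\breve k^\top$.

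With the structure in hand, the complexity count is bookkeeping. Every covariance operation lives on the $(\nu+1)\times(\nu+1)$ right factor ($\breve\Phi\breve C_n\breve\Phi^\top$ and the Joseph product), costing $O(\nu^3)$; the mean extrapolation $M_n\breve\Phi^\top$ costs $O(d\nu^2)$ and its rank-one correction $O(d\nu)$; evaluating $f$ and assembling $z_{n+1}$ is $O(d)$. Crucially, the filter recursion itself never touches $\Gamma^{-1}$, thanks to the cancellation above; the only place a solve against $\Gamma$ survives is the scalar quasi-MLE for $\gamma$, whose per-step contribution is the quadratic form $\breve s_{n+1}^{-1}\,z_{n+1}^\top\Gamma^{-1}z_{n+1}$---precisely the quantity that \cref{ass:inverse-of-gamma-cheap} makes available in $O(d)$. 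Summing gives $O(\nu^3 + d\nu^2)$ flops. For memory, one stores $\Gamma$ (or its structured representation, at worst $O(d^2)$), the mean at $O(d\nu)$, and the single right factor $\breve C_n$ at $O(\nu^2)$, while the full covariance $\Gamma\otimes\breve C_n$ is never materialised; this gives $O(d^2 + d\nu + \nu^2)$.

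The main obstacle---and the single step deserving the most care---is the gain cancellation $\Gamma\Gamma^{-1}=I_d$, which is the reason the EK0 scales but a dense EK1 does not: it hinges entirely on $H$ having an identity left Kronecker factor, which fails the moment $F_y$ is a general dense Jacobian (then $H$ mixes the coordinate and derivative axes and $\Gamma$ can no longer be pulled through). A secondary point is that \cref{ass:inverse-of-gamma-cheap} is invoked at exactly one place, the calibration quadratic form, so I would make explicit that the two constraints flagged before the proposition (the post-hoc $\gamma^2$-scaling of the right factor, and the $\Gamma^{-1}$-solve) are what reduce the otherwise quadratic and cubic $\Gamma$-costs to $O(d)$; an analogous cancellation in square-root form---propagating a right-factor square root through a QR decomposition of an $O(\nu)$-sized matrix---preserves the same bounds.
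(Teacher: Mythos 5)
Your proposal is correct and follows essentially the same route as the paper's proof in \cref{sec:proof-complexity-kronecker}: induction on the Kronecker form $C_n = \Gamma \otimes \breve{C}_n$, the identity left factor of $H = E_1$ for the EK0, the resulting gain $K = I_d \otimes \breve{K}$ (whose $\Gamma\Gamma^{-1}$ cancellation you spell out more explicitly than the paper does), calibration as the sole place where \cref{ass:inverse-of-gamma-cheap} enters, and the same per-phase flop and memory counts. No gaps.
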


The proof parallels that of \cref{prop:complexity_diagonal_jacobian} (details are in \cref{sec:proof-complexity-kronecker}).
It hinges on computing everything only in the ``right'' factor of each Kronecker matrix.
The proposition can be extended to time-varying diffusion if one tracks $\gamma$ in the ``right'' Kronecker factor instead of the ``left'' one. Since this obfuscates the notation, we prove the claim in \cref{sec:kronecker-time-varying-diffusion}.
The quadratic $O(d^2)$ memory requirement is entirely due to the cost of storing $\Gamma$---if $\Gamma$ or its inverse are banded matrices, for instance, it reduces to $O(d\nu + \nu^2)$.

\section{EMPIRICAL EVALUATION}
\label{sec:experiments}

\paragraph{A Single ODE Filter Step}
We begin by evaluating the cost of a single step of the ODE filter variations on the Lorenz96 problem \citep{lorenz1996predictability}.
This is a chaotic dynamical system and recommends itself for the first experiment, as its dimension can be increased freely.
\cref{sec:all-the-odes} contains more detailed descriptions of all ODE models.
We time a single ODE filter step for increasing ODE dimension $d$ and different solver orders $\nu \in \{2,4,6\}$.
The results are depicted in \cref{fig:1_sterilised_lorenz_attempt_step}.
The traditional EK0 and EK1 become infeasible due to their cubic complexity in the dimension.
The diagonal EK1 and the diagonal EK0 exhibit their $O(d)$ cost.
The Kronecker EK0 is cheaper than the independence-based solvers. A step with the Kronecker EK0 takes $\sim$1 second for a 16 million-dimensional ODE on a generic, consumer-level CPU.
Altogether,
\cref{fig:1_sterilised_lorenz_attempt_step} confirms \cref{prop:complexity_diagonal_jacobian,prop:complexity-kronecker}.

\paragraph{A Full Simulation}
Next, we evaluate whether the performance gains for a single ODE filter step translate into a reduced overall runtime (including step-size adaptation and calibration) on a medium-dimensional problem: the Pleiades problem \citep{hairer1993solving}.
It describes the motion of seven stars in a plane and is commonly solved as a system of 28 first-order ODEs.
The results are in \cref{fig:experiment_2_full_solves}.
\begin{figure}
    \begin{center}
        \includegraphics[width=0.47\textwidth]{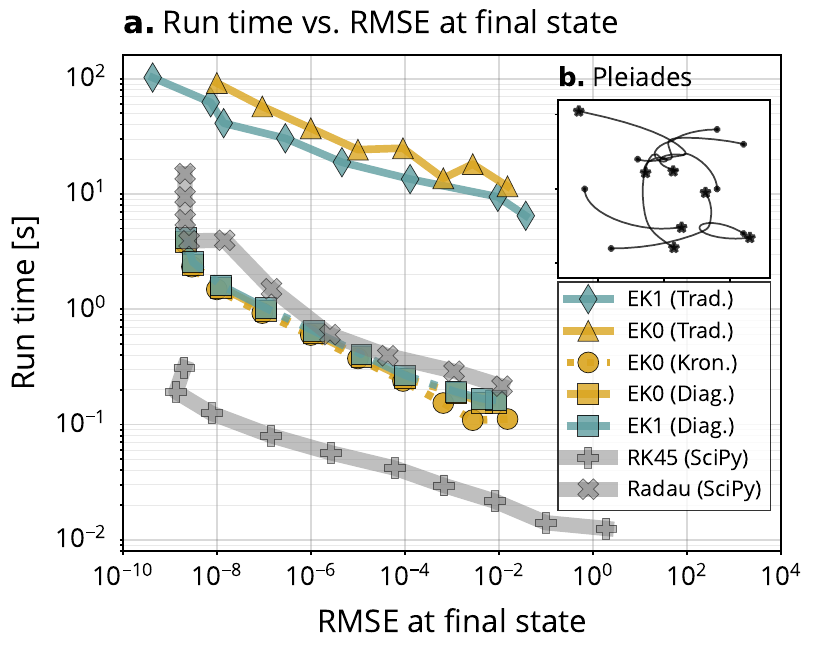}
    \end{center}
    \caption{\textit{Runtime efficiency of fast ODE filters:} Run time per root mean-square error of the ODE filters (a) on the Pleiades problem (b). The figure also shows two reference ODE filters, EK0 and EK1 in the traditional implementation, and Scipy's RK45 (explicit) and Radau (implicit). On the 28-dimensional Pleiades problem, the improved implementation accelerates the ODE filter implementations significantly.
    }
    \label{fig:experiment_2_full_solves}
\end{figure}
Pleiades reveals the increased efficiency of the ODE filters.
The probabilistic solvers are as fast as Radau, only by a factor $\sim$10 slower than SciPy's RK45 \citep{virtanen2020scipy}, but 100 times faster than their reference implementations. (It should be noted that the ODE filters use just-in-time compilation for some components, whereas SciPy does not.)

\paragraph{A High-Dimensional Setting}
\label{subsec:high-dimensional-settings}
To evaluate how well the improved efficiency translates to extremely high dimensions, we solve the discretised FitzHugh-Nagumo PDE model on high spatial resolution (which translates to high dimensional ODEs).
The results are in \cref{fig:experiment_3_pde_on_gpu}.
\begin{figure}[t!]
    \begin{center}
        \includegraphics[width=0.47\textwidth]{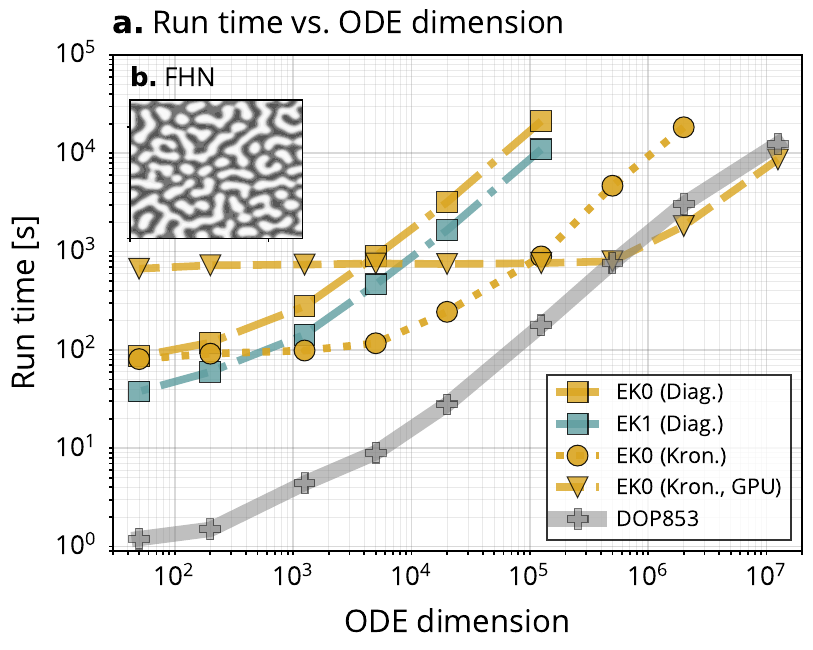}
    \end{center}
    \caption{\textit{High-dimensional PDE discretisation:}
        Run-time of ODE filters on the discretised FitzHugh-Nagumo model for increasing ODE dimension (i.e.\ increasing spatial resolution) including calibration and adaptive time-steps. SciPy's DOP853 for reference. Simulating  $\gg 10^6$-dimensional ODEs takes $\approx 3h$.
    }
    \label{fig:experiment_3_pde_on_gpu}
\end{figure}
The main takeaway is that ODEs with millions of dimensions can be solved \emph{probabilistically} within a realistic time frame (hours), which has not been possible before.
GPUs improve the runtime for extremely high-dimensional problems ($d \gg 10^5$).

\paragraph{Stability Of The Diagonal EK1}
\label{subsec:stability-diagonal-ek1}
How much do we lose by ignoring off-diagonal elements in the Jacobian?
To evaluate the loss (or preservation) of stability against the $A$-stable EK1 \citep{tronarp2019probabilistic}, we solve the Van der Pol system \citep{guckenheimer1980dynamics}.
It includes a free parameter $\mu > 0$, whose magnitude governs the stiffness of the problem: the larger $\mu$, the stiffer the problem, and for \eg $\mu =10^6$, Van der Pol is a famously stiff equation.
The results are in \cref{fig:experiment_4_stability_ek1}.
\begin{figure}[t!]
    \begin{center}
        \includegraphics[width=0.47\textwidth]{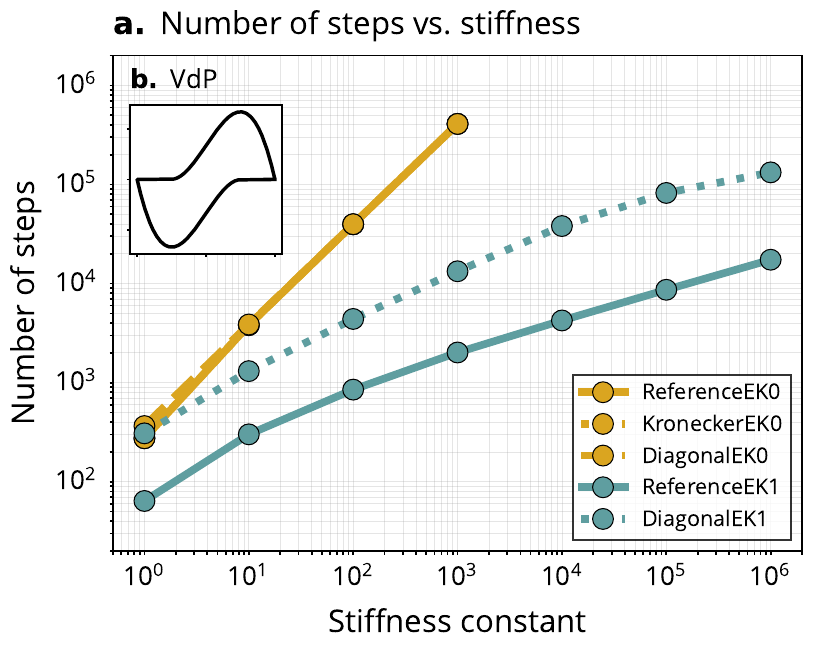}
    \end{center}
    \caption{\textit{Stability:} Number of steps taken by an ODE filter (a) for an increasingly stiff Van der Pol system (b). The diagonal EK1 is more stable than the EK0, but less stable than the EK1 (which is expected because it uses strictly less Jacobian information).}
    \label{fig:experiment_4_stability_ek1}
\end{figure}
We observe how the diagonal EK1 is less stable than the reference EK1 for increasing stiffness constant, but also that it is significantly more stable than the EK0, for instance.
It is a success that the diagonal EK1 solves the van der Pol equation for large $\mu$.

\section{CONCLUSION}

For probabilistic ODE solvers to capitalize on their theoretical advantages, their computational cost has to come close to that of their non-probabilistic point-estimate counterparts (which benefit from decades of optimization).
High-dimensional problems are one obstacle on this path, which we cleared here.
We showed that independence assumptions in the underlying state-space model, or preservation of Kronecker structures, can bring the computational complexity of a large subset of known ODE filters close to non-probabilistic, explicit Runge--Kutta methods.
As a result, probabilistic simulation of extremely large systems of ODEs is now possible, opening up opportunities to exploit the advantages of probabilistic ODE solvers on challenging real-world problems.

\subsubsection*{Acknowledgements}
The authors gratefully acknowledge financial support by the German Federal Ministry of Education and Research (BMBF) through Project ADIMEM (FKZ 01IS18052B).
They also gratefully acknowledge financial support by the European Research Council through ERC StG Action 757275 / PANAMA; the DFG Cluster of Excellence ``Machine Learning - New Perspectives for Science'', EXC 2064/1, project number 390727645; the German Federal Ministry of Education and Research (BMBF) through the Tübingen AI Center (FKZ: 01IS18039A); and funds from the Ministry of Science, Research and Arts of the State of Baden-Württemberg.
Moreover, the authors thank the International Max Planck Research School for Intelligent Systems (IMPRS-IS) for supporting Nicholas Krämer and Nathanael Bosch.

The authors thank Katharina Ott for helpful feedback on the manuscript.

\medskip
\bibliography{bibfile}

\appendix

\onecolumn
\aistatstitle{Probabilistic ODE Solutions in Millions of Dimensions: \\
  Supplementary Materials}

\section{SQUARE-ROOT IMPLEMENTATION OF PROBABILISTIC ODE SOLVERS}
\label{sec:sqrt_implementation}

The following two sections detail the square-root implementation of the transitions underlying the probabilistic ODE solver.
The whole section is a synopsis of the explanations by \citet{kramer2020stable}. See also the monograph by \citet{grewal2014kalman} for additional details.

\subsection{Extrapolation}
\label{sec:prediction_step_sqrt}
The extrapolation step
\begin{align}
    C^-_{n+1} = \Phi(h_n) C_n \Phi(h_n)^\top + \Sigma(h_n)
\end{align}
does not lead to stability issues further down the line (i.e. in calibration/correction/smoothing steps) if carried out in square root form.
Square-root form means that instead of tracking and propagating covariance matrices $C$, only square root matrices $C = \sqrt{C} \sqrt{C}^\top$ are used for extrapolation and correction steps without forming the full covariance.

This is possible by means of QR decompositions.
The matrix square root  $\sqrt{C_{n+1}^-}$ arises from $\sqrt{C_n}$ through the QR decomposition of
\begin{align}
    Q
    \begin{pmatrix}
        R \\
        0
    \end{pmatrix}
    \leftarrow
    \begin{pmatrix}
        \sqrt{C_n}\Phi(h_n)^\top \\
        \sqrt{\Sigma(h_n)^\top}
    \end{pmatrix},
    \quad \sqrt{C_{n+1}^-} \leftarrow R^\top
\end{align}
because
\begin{align}
    R^\top R
    =
    \begin{pmatrix}
        R \\
        0
    \end{pmatrix}^\top
    Q^\top Q     \begin{pmatrix}
        R \\
        0
    \end{pmatrix}
    =
    \begin{pmatrix}
        \sqrt{C_n}\Phi(h_n)^\top \\
        \sqrt{\Sigma(h_n)^\top}
    \end{pmatrix}^\top
    \begin{pmatrix}
        \sqrt{C_n}\Phi(h_n)^\top \\
        \sqrt{\Sigma(h_n)^\top}
    \end{pmatrix}
    =
    \Phi(h_n) C_n \Phi(h_n)^\top + \Sigma(h_n).
\end{align}
QR decompositions of a rectangular matrix $M \in \Rbb^{m \times n}$, $m > n$ costs $O(m n^2)$, which implies that the covariance square root correction costs $O(d^3\nu^3)$.
The QR decomposition is unique up to orthogonal row-operations (e.g. multiplying with $\pm 1$).
Probabilistic ODE solvers require only \emph{any} square root matrix, so this equivalence relation can be safely ignored -- they all imply the same covariance.

\subsection{Correction}

The correction follows a similar pattern.
Recall the linearised observation model
\begin{align}
    \Ical(y) \approx \hat\Ical(y) =  H y + b
\end{align}
where $H$ contains the vector field information and the Jacobian information (potentially, depending on linearisation style).
There are two ways of performing square root corrections: the conventional way, and the way that is tailored to probabilistic ODE solvers, which builds on Joseph form corrections.

\paragraph{Conventional Way}
Let $\sqrt{C^-}$ be a matrix square root of the current extrapolated covariance (we drop the $n+1$ index for improved readability).
Let $0_n$ be the $n \times n$ zero matrix, and $0_{n\times m}$ the $n \times m$ zero matrix.
The heart of the square root correction is another QR decomposition of the matrix
\begin{align}
    Q
    \begin{pmatrix}
        R_{11}                & R_{12} \\
        0_{d(\nu+1) \times d} & R_{22}
    \end{pmatrix}
    \leftarrow
    \begin{pmatrix}
        \sqrt{C^-}^\top H^\top & \sqrt{C^-}^\top \\
        0_{d(\nu+1)\times d}   & 0_{d(\nu+1)}
    \end{pmatrix}
\end{align}
with $R_{11} \in \Rbb^{d \times d}$, $R_{12} \in \Rbb^{d \times d(\nu+1)}$, and $R_{22} \in \Rbb^{d(\nu+1) \times d(\nu + 1)}$.
The $R_{ij}$ matrices contain the relevant information about the covariance matrices involved in the correction:
\begin{itemize}
    \item $\sqrt{S} = R_{11}^\top$ is the matrix square root of the innovation covariance
    \item $\sqrt{C} = R_{22}^\top$ is the matrix square root of the posterior covariance
    \item $K = R_{12} (R_{11})^{-1}$ is the Kalman gain and can be used to correct the mean
\end{itemize}
This QR decomposition costs $O(d^3 \nu^3)$ again, but the matrix involved is larger than the stack of matrices in the extrapolation step (it has $d$ more columns), so for high dimensional problems, the increased overhead becomes significant.
However, if only \emph{any} square root matrix is desired, this step can be circumvented.

\paragraph{Joseph Way}
Again, let $\sqrt{C^-}$ be the square root matrix of the current extrapolated covariance which results from the extrapolation step in square root form.
Next, the full covariance is assembled (which goes against the usual grain of avoiding full covariance matrices, but in the present case does the job) as $C^- = \sqrt{C^-} \sqrt{C^-}^\top$.
Since in the probabilistic solver, $C^-$ is either a Kronecker matrix $I_d \otimes \breve{C}^-$ or a block diagonal matrix $\blockdiagonal((C^-)^1, ..., (C^-)^d)$, this is sufficiently cheap.
The innovation covariance itself then becomes
\begin{subequations}
    \begin{align}
        S
         & = H C^- H^\top                                                                                        \\
         & = (E_1 - F_x E_0) C^- (E_1 - F_x E_0)^\top                                                            \\
         & = E_1 C^- E_1^\top - F_x E_0 C^- E_1^\top - E_1 C^- E_0^\top F_x^\top + F_x E_0 C^- E_0^\top F_x^\top
    \end{align}
\end{subequations}
which can be computed rather efficiently because $E_i C^- E_j^\top$ only involves accessing elements, not matrix multiplication.
The only non-negligible cost here is multiplication with the Jacobian of the ODE vector field (which is often sparse in high-dimensional problems).
Then, the Kalman gain
\begin{align}
    K = C^- H^\top S^{-1}
\end{align}
can be computed from $S$ which implies that the covariance correction reduces to
\begin{align}
    \sqrt{C} = (I - K H) \sqrt{C^-}
\end{align}
which is the ``left half'' of the Joseph correction.
The resulting matrix is square, and a matrix square root of the posterior covariance, but not triangular thus no valid Cholesky factor.
If the sole purpose of the square root matrices is improved numerical stability, generic square root matrices suffice.

\section{INDEPENDENCE AND FAST ODE FILTERS FOR TIME-CONSTANT DIFFUSION}
A similar result to \cref{prop:vector-valued-time-varying-diffusion} can be formulated for vector-valued, time-constant diffusion models.
\label{sec:independence-time-constant-diffusion}
\setcounter{proposition}{0}
\renewcommand{\theproposition}{\Alph{section}\arabic{proposition}}
\begin{proposition}
    Under \cref{ass:diagonal_gamma} and for diagonal $F_y$, a quasi-maximum likelihood estimate (MLE) for a vector-valued, time-constant diffusion model
    \(\Gamma = \diag((\gamma^1)^2, ..., (\gamma^d)^2)\)
    is given by the estimator
    \begin{align}\label{eq:time-varying-vector-valued-diffusion-supp}
        (\hat\gamma^i)^2 := \frac{1}{N} \sum_{i=1}^N \frac{(z_n^i)^2}{[S_n]_{ii}},
        \qquad i = 1, \dots, d,
    \end{align}
    where \(S_n := H(t_n) \Sigma(h_n) H(t_n)^\top \) is the diagonal covariance matrix of the measurement \(Z_n\) (recall \cref{subsec:information_model}).
\end{proposition}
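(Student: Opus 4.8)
The plan is to treat this as a textbook maximum-likelihood problem for the scale parameters of a Gaussian, exploiting the additive structure of the prediction-error decomposition in \cref{eq:approx_evidence}. First I would write the (approximate) negative log-likelihood of the observed zero-defects $\{Z_n = 0\}_{n=1}^N$,
\[
  \ell(\Gamma) = \tfrac12 \sum_{n=1}^N \left( z_n^\top S_n^{-1} z_n + \log\det S_n \right) + \text{const},
\]
and reduce the claim to minimising $\ell$ over $\Gamma = \diag((\gamma^1)^2, \dots, (\gamma^d)^2)$. As in the time-varying case, I would invoke the error-free-previous-state assumption so that $S_n = H(t_n)\Sigma(h_n)H(t_n)^\top$ and, crucially, so that the innovation mean $z_n$ carries no dependence on $\Gamma$; this is what confines the entire $\Gamma$-dependence of $\ell$ to the covariances $S_n$.

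Next I would extract the structural facts that decouple the coordinates. By \cref{prop:complexity_diagonal_jacobian}, a diagonal Jacobian $F_y$ makes $H(t_n)$ block diagonal, and together with \cref{ass:diagonal_gamma} it makes $S_n$ diagonal. Writing $\Sigma(h_n) = \Gamma \otimes \breve\Sigma(h_n)$ as in \cref{eq:phi_and_sigma}, the $i$-th diagonal block of $\Sigma(h_n)$ equals $(\gamma^i)^2 \breve\Sigma(h_n)$, so each diagonal entry factorises as $[S_n]_{ii} = (\gamma^i)^2\, r_n^i$, where $r_n^i := H^i \breve\Sigma(h_n) (H^i)^\top$ is a reference variance independent of every $\gamma^j$. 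The point to record is that $[S_n]_{ii}$ is a function of the single scale $\gamma^i$ and of no other coordinate.

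With $S_n$ diagonal, both $z_n^\top S_n^{-1} z_n = \sum_i (z_n^i)^2/[S_n]_{ii}$ and $\log\det S_n = \sum_i \log [S_n]_{ii}$ split coordinate-wise, so $\ell$ is a double sum over $n$ and $i$ that regroups into $d$ terms, the $i$-th depending only on $(\gamma^i)^2$. Substituting the factorisation, the $i$-th term is $\tfrac12\bigl[(\gamma^i)^{-2} Q^i + N\log(\gamma^i)^2\bigr]$ up to an additive constant, with $Q^i := \sum_n (z_n^i)^2/r_n^i$. This is the classical one-parameter Gaussian-variance MLE: differentiating in $(\gamma^i)^2$ and setting the derivative to zero gives $(\gamma^i)^2 = Q^i/N$, and convexity in $\log(\gamma^i)^2$ (the term tending to $+\infty$ at both extremes) confirms it is the unique minimiser. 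Identifying $r_n^i$ with $[S_n]_{ii}$, the reference measurement variance exactly as in \cref{eq:time-varying-vector-valued-diffusion}, yields the stated estimator.

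The only genuinely delicate point, and the one I would single out, is the coordinate decoupling in the third step: it rests entirely on $[S_n]_{ii}$ being a function of $\gamma^i$ alone, which in turn needs both the block-diagonal $H(t_n)$ and the Kronecker form $\Sigma(h_n) = \Gamma \otimes \breve\Sigma(h_n)$ furnished by \cref{ass:diagonal_gamma} and the diagonal-Jacobian hypothesis. Once this separability is secured, the remaining per-coordinate optimisation is routine and mirrors the scalar time-constant argument of \citet{tronarp2019probabilistic}, so I would not belabour it.
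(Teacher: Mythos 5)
Your proposal is correct, and its second half---the coordinate-wise splitting of $z_n^\top S_n^{-1} z_n + \log\det S_n$ followed by the scalar Gaussian-variance MLE in each coordinate---is exactly the paper's computation. Where you genuinely differ is in how the separability $[S_n]_{ii} = (\gamma^i)^2 s_n^i$ is established. You take the proposition's definition $S_n = H(t_n)\Sigma(h_n)H(t_n)^\top$ at face value and justify it with the error-free-previous-state idealisation, so the factorisation follows in one line from $\Sigma(h_n) = \Gamma \otimes \breve\Sigma(h_n)$ and the block-diagonal $H$. The paper instead proves a stronger structural fact by induction over filter steps: starting from $C_0 = \blockdiagonal((\gamma^1)^2 \breve{C}_0, \dots, (\gamma^d)^2 \breve{C}_0)$, it shows that extrapolation, measurement, and the Joseph correction all preserve block diagonality \emph{and} the multiplicative dependence of each block on $(\gamma^i)^2$ (the correction matrix $\Xi_n^i$ is invariant under rescaling of the block, so the factor survives the update). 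Consequently the \emph{actual} innovation covariances $S_n = H_n C_n^- H_n^\top$---which include the propagated posterior covariance, not merely the process noise---have the form $\diag((\gamma^1)^2 s_n^1, \dots, (\gamma^d)^2 s_n^d)$, and the same estimator emerges without any error-free assumption; as a byproduct, this shows that calibration can happen entirely post-hoc by rescaling stored covariance factors once $\hat\Gamma$ is known, which is what the fast implementation of \cref{prop:complexity_diagonal_jacobian} exploits. Your route is shorter and literally faithful to the stated $S_n$; the paper's route buys the quasi-MLE for the genuine prediction-error decomposition plus the post-hoc calibration property. A minor point in your favour: you silently repair the index typo in the stated estimator, whose sum should run over $n = 1, \dots, N$ rather than over $i$.
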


\begin{proof}
    The proof is structured as follows.
    First, we show that an initial covariance
    \begin{equation}
        C_0 = \blockdiagonal((\gamma^1)^2 \breve{C}_0, \dots, (\gamma^d)^2 \breve{C}_0)
    \end{equation}
    implies covariances
    \begin{subequations}
        \begin{align}
            C_n^- & = \blockdiagonal \left( (\gamma^1)^2 (C_n^1)^-, \dots, (\gamma^d)^2 (C_n^d)^- \right), \\
            C_n   & = \blockdiagonal \left( (\gamma^1)^2 C_n^1, \dots, (\gamma^d)^2 C_n^d \right),         \\
            S_n   & = \diag \left((\gamma^1)^2 s_n^1, \dots, (\gamma^d)^2 s_n^d \right).
        \end{align}
    \end{subequations}
    Then, for measurement covariances \(S_n\) of such form, we can compute the (quasi) maximum likelihood estimate
    \(\hat{\Gamma}\).
    Because every covariance depends multiplicatively on \(\gamma\), calibration can happen entirely post-hoc.

    \paragraph{Block-Wise Scalar Diffusion}
    Recall from \cref{subsec:prior} that the transition matrix and the process noise covariance are of the form
    \(\Phi(h_n) = I_d \otimes \breve{\Phi}(h_n)\) and
    \(\Sigma(h_n) = \Gamma \otimes \breve{\Sigma}(h_n)\).
    Thus, for a diagonal diffusion
    \(\Gamma = \diag((\gamma^1)^2, ..., (\gamma^d)^2)\),
    both \(\Phi(h_n)\) and \(\Sigma(h_n)\) are block diagonal.
    Assuming a block diagonal covariance matrix that depends multiplicatively on \(\gamma\),
    \begin{equation}
        C_{n-1} = \blockdiagonal \left( (\gamma^1)^2 C_{n-1}^1, \dots, (\gamma^d)^2 C_{n-1}^d \right),
    \end{equation}
    the extrapolated covariance is also of the form
    \begin{subequations}
        \begin{align}
            C_n^-     & = \blockdiagonal \left((\gamma^1)^2 (C_n^1)^-, \dots, (\gamma^d)^2 (C_n^d)^- \right), \\
            (C_n^i)^- & := \breve{\Phi}(h_n) C_{n-1}^i \breve{\Phi}(h_n)^\top + \breve{\Sigma}(h_n),
            \qquad i = 1, \dots, d.
        \end{align}
    \end{subequations}
    The diagonal Jacobian \(F_y\) implies a block diagonal linearisation matrix
    \begin{subequations}
        \begin{align}
            H_n   & = E_1 - F_y E_0 = \blockdiagonal \left(H_n^1, \dots, H_n^d\right), \\
            H_n^i & := e_1 - [F_y]_{i,i} e_0,
            \qquad i = 1, \dots, d.
        \end{align}
    \end{subequations}
    The measurement covariance \(S_n\) is therefore given by a \emph{diagonal} matrix
    and depends multiplicatively on \(\gamma\), as
    \begin{subequations}
        \begin{align}
            S_n   & = H_n C_n^- H_n^\top
            = \diag \left((\gamma^1)^2 s_n^1, \dots, (\gamma^d)^2 s_n^d \right), \\
            s_n^i & := H_n^i (C_n^i)^- (H_n^i)^\top,
            \qquad i = 1, \dots, d.
        \end{align}
    \end{subequations}
    This implies a block diagonal Kalman gain
    \begin{subequations}
        \begin{align}
            \Xi_n   & = I - C_n^- H(t_n)^\top S_n^{-1} H(t_n) = \blockdiagonal \left(\Xi_n^1, \dots \Xi_n^d \right), \\
            \Xi_n^i & := I_{\nu+1} - (C_n^-)^i (H_n^i)^\top H_n^i / s_n^i
            \qquad i = 1, \dots, d.
        \end{align}
    \end{subequations}
    Finally, we obtain the corrected covariance
    \begin{subequations}
        \begin{align}
            C_n   & = \blockdiagonal \left((\gamma^1)^2 C_n^1, \dots, (\gamma^d)^2 C_n^d \right), \\
            C_n^i & := \Xi_n^i (C_n^i)^- (\Xi_n^i)^\top
            \qquad i = 1, \dots, d.
        \end{align}
    \end{subequations}
    This concludes the first part of the proof.

    \paragraph{Computing The Quasi-MLE}
    It is left to compute the (quasi) MLE
    \(\hat{\Gamma} = \diag((\hat{\gamma}^1)^2, ..., (\hat{\gamma}^d)^2)\)
    by maximizing the log-likelihood
    \({\log p(z_{1:N}) = \log \prod_{n=1}^N \mathcal{N} \left( 0; z_n, S_n \right)}\).
    Since
    \(S_n = \diag \left((\gamma^1)^2 s_n^1, \dots, (\gamma^d)^2 s_n^d \right)\)
    is a diagonal matrix, we obtain
    \begin{subequations}
        \begin{align}
            \hat{\Gamma}
             & = \argmax_\Gamma \sum_{n=1}^N \log \mathcal{N} \left( 0; z_n, S_n \right)                                                                        \\
             & = \argmax_\Gamma \sum_{i=1}^d \left( - \frac{N \log (\hat{\gamma}^i)^2 }{2} - \sum_{n=1}^N \frac{(z_n)_d^2}{2 s_n^i (\hat{\gamma}^i)^2} \right).
        \end{align}
    \end{subequations}
    By taking the derivative and setting it to zero, we obtain the quasi-MLE from
    \cref{eq:time-varying-vector-valued-diffusion-supp}.

\end{proof}

\section{PROOF OF \cref{prop:complexity-kronecker}}
\label{sec:proof-complexity-kronecker}

\begin{proof}

    Let $Y_n \sim \Ncal(m_n, C_n)$. Assume $C_n = \Gamma \otimes \breve{C}_n$ which is no loss of generality, because such a Kronecker structure is preserved through the ODE filter step as shown below.

    \emph{(i) Extrapolate mean:} The mean extrapolation costs $O(d \nu^2)$ like in the proof of \cref{prop:complexity_diagonal_jacobian}.

    \emph{(ii) Evaluate the ODE:} Evaluation of $H$ and $b$ is essentially free---recall that we only consider the EK0 in this setting, which uses the projection $H(t_n) = E_1$. Matrix multiplication with $H$ consists of a projection, which costs $O(1)$.

    \emph{(iii) Calibrate:} Calibration of a time-constant $\gamma^2$ costs $O(d)$ under \cref{ass:inverse-of-gamma-cheap}.

    \emph{(iv) Extrapolate covariance:}
    In the time-constant diffusion model, $\Sigma(h_n)$ and $C_n$ are both Kronecker matrices and share the left Kronecker factor: $\Gamma$. Thus, the extrapolation of the covariance can be carried out ``in the right Kronecker factor'', which costs $O(\nu^3)$ in traditional as well as square root implementation.
    Denote the extrapolated covariance by $C_{n+1}^- := \Gamma \otimes \breve{C}_{n+1}^-$.

    \emph{(v) Measure:}
    Recall $H(t_n) = E_1 = I \otimes e_1$.
    The mean of the measured random variable $Z_n \sim \Ncal(z_n, S_n)$ comes at negligible cost. The covariance
    \begin{align}
        S_{n+1}
        = H(t_{n+1}) C_{n+1}^- H(t_{n+1})^\top
        =  \Gamma \otimes\left[ e_1 \breve{C}_{n+1}^- e_1^\top\right]
    \end{align}
    requires a single element in $\breve{C}_{n+1}^-$. The Kalman gain
    \begin{align}
        K
        := C_{n+1}^- H(t_{n+1})^\top S_{n+1}^\top
        = I \otimes \breve{K},
    \end{align}
    with
    $        \breve{K} := e_1 \breve{C}_{n+1}^- / \left[ e_1\breve{C}_{n+1}^- e_1^\top\right]
    $    involves dividing the first row of $\breve{C}_{n+1}^-$ by a scalar. Its cost is $O(\nu+1)$.

    \emph{(v) Correct mean and covariance:}
    The mean is corrected in $O(d \nu^2)$ as in the proof of \cref{prop:complexity_diagonal_jacobian}.
    Due to the Kronecker structure in $K$, the ``left'' Kronecker factor of $C_{n+1}$ must be $\Gamma$ again.
    Therefore, we need to correct only the ``right'' Kronecker factor in $O(\nu^3)$.

    All in all, under the assumption of cheap calibration, a single step with the EK0 costs $O(d \nu^2)$ and the expensive steps are (as before) the covariance extrapolation and the covariance correction. The total memory costs are the requirements of storing $\Gamma$, the mean(s) in $O(\nu d)$, and the ``right'' Kronecker factor(s) in $O(\nu^2)$.
\end{proof}

\section{KRONECKER STRUCTURE AND FAST ODE FILTERS FOR TIME-VARYING DIFFUSION}
\label{sec:kronecker-time-varying-diffusion}
In the following we extend the results of
\cref{prop:complexity-kronecker}
to time-varying diffusion models.
\setcounter{proposition}{0}
\renewcommand{\theproposition}{\Alph{section}\arabic{proposition}}
\begin{proposition}
    Under \cref{ass:inverse-of-gamma-cheap}, and if a time-varying diffusion model $\Gamma_n = \gamma_n^2 \breve{\Gamma}$ is calibrated via $\gamma_n$, a single step of the EK0 costs $O(\nu^3 + d \nu^2)$ floating point operations, and $O(d\nu + d^2 + \nu^2)$ memory.
\end{proposition}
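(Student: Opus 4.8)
The plan is to mirror the proof of \cref{prop:complexity-kronecker} step by step, altering only where the time-dependence of the diffusion enters. The central device, as announced in the main text, is to keep the \emph{left} Kronecker factor of every covariance fixed at the unscaled matrix $\breve\Gamma$ and to absorb the time-varying scalar $\gamma_n^2$ into the \emph{right} factor. Concretely, I would maintain the loop invariant $C_n = \breve\Gamma \otimes \breve C_n$ and show that a single step preserves it; since the initial covariance already has this form, no generality is lost. Two of the steps carry over verbatim from the time-constant case: the mean extrapolation $m_{n+1}^- = (I_d \otimes \breve\Phi(h_n)) m_n$ acts block-wise and costs $O(d\nu^2)$ as in \cref{prop:complexity_diagonal_jacobian}, and the EK0 measurement matrix $H(t_{n+1}) = E_1 = I_d \otimes e_1$ makes assembly of $H$ and $b$ essentially free.

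Next comes calibration, which is where the time-varying case genuinely departs from \cref{prop:complexity-kronecker}. There, calibration was post-hoc because a single scalar multiplies every covariance; here $\gamma_n$ must be computed \emph{before} the covariance extrapolation, since the right factor will absorb it. I would apply the local quasi-MLE of \cref{eq:time-varying-scalar-diffusion} with the \emph{unscaled} process noise $\breve\Gamma \otimes \breve\Sigma(h_n)$ substituted for $\Sigma(h_n)$ (otherwise the formula is circular in $\gamma_n$). Using $H = I_d \otimes e_1$, the bracketed matrix collapses to $\breve\Gamma \cdot (e_1 \breve\Sigma(h_n) e_1^\top)$, a scalar multiple of $\breve\Gamma$, so that $\hat\gamma_n^2 \propto z_n^\top \breve\Gamma^{-1} z_n$. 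This is exactly the quantity that \cref{ass:inverse-of-gamma-cheap} guarantees in $O(d)$, which is why the assumption is indispensable precisely here.

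With $\gamma_n^2$ in hand, the remaining steps run purely in the right factor. Covariance extrapolation gives $C_{n+1}^- = \breve\Gamma \otimes (\breve\Phi \breve C_n \breve\Phi^\top + \gamma_n^2 \breve\Sigma(h_n))$, so the scalar lands in $\breve C_{n+1}^-$ while the left factor is untouched; this costs $O(\nu^3)$ in both the plain and the square-root implementation. The innovation covariance $S_{n+1} = \breve\Gamma \cdot (e_1 \breve C_{n+1}^- e_1^\top)$ and the Kalman gain $K = I_d \otimes \breve K$ follow as in \cref{prop:complexity-kronecker}, the $\breve\Gamma$ factors cancelling to leave $I_d$ on the left, with the gain costing $O(\nu)$. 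Finally $\Xi = I_d \otimes \breve\Xi$ is Kronecker with left factor $I_d$, so the Joseph update yields $C_{n+1} = \breve\Gamma \otimes (\breve\Xi \breve C_{n+1}^- \breve\Xi^\top)$, re-establishing the invariant at cost $O(\nu^3)$, and the mean correction is $O(d\nu^2)$. Summing the dominant terms gives $O(\nu^3 + d\nu^2)$ flops, while storing $\Gamma$, the mean, and the right factor gives $O(d^2 + d\nu + \nu^2)$ memory, as claimed.

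The main obstacle I anticipate is the calibration step: one must verify that the \emph{local}, time-varying quasi-MLE stays computable in $O(d)$ even for a dense $\Gamma$, and that injecting the freshly estimated $\gamma_n^2$ into the right factor does not corrupt the fixed-left-factor invariant on which every subsequent cancellation depends. Everything else reduces to bookkeeping in Kronecker algebra that parallels the time-constant proof.
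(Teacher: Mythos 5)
Your proposal is correct and follows essentially the same route as the paper's own proof: you identify that only calibration and covariance extrapolation differ from the time-constant case, reduce the local quasi-MLE to a quantity of the form $z^\top \breve{\Gamma}^{-1} z$ that is $O(d)$ precisely by \cref{ass:inverse-of-gamma-cheap}, and keep all matrix-matrix work in the right Kronecker factor so the $O(\nu^3 + d\nu^2)$ flop and $O(d\nu + d^2 + \nu^2)$ memory counts follow. The only (immaterial) difference is bookkeeping: you fix the left factor at $\breve{\Gamma}$ and absorb $\gamma_n^2$ entirely into the right factor, whereas the paper keeps $\gamma_n^2\breve{\Gamma}$ on the left and shifts the ratio $\gamma_n^2/\gamma_{n+1}^2$ into the right factor before each extrapolation---algebraically the same invariant and the same cancellation in the Kalman gain.
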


\begin{proof}
    The proof of \cref{prop:complexity-kronecker} shown in \cref{sec:proof-complexity-kronecker} depends on the specific time-fixed diffusion model only in the calibration (iii) and the extrapolation of the covariance (iv).
    In the following, we discuss these two steps for a time-varying diffusion \(\Gamma_n = \gamma_n^2 \breve{\Gamma}\).
    We show that Kronecker structure is preserved and we obtain the same complexities as in \cref{prop:complexity-kronecker}.

    \emph{(iii) Calibrate:}
    Calibration of a time-varying $\gamma_{n+1}^2$ is done with
    \begin{subequations}
        \begin{align}
            \hat \gamma_{n+1}^2
            : & = \frac{1}{d} z_{n+1}^\top \left[
                H(t_{n+1}) \Sigma(h_{n+1}) H(t_{n+1})^\top
                \right]^{-1} z_{n+1}                                                                                                        \\
              & = \frac{1}{d} z_{n+1}^\top \left(
            \breve{\Gamma}_{n+1} \cdot \left[ \breve{\Sigma}_{n+1}^- \right]_{11}
            \right)^{-1} z_{n+1}                                                                                                            \\
              & = \frac{1}{d} z_{n+1}^\top \left(\breve{\Gamma}_{n+1}\right)^{-1} z_{n+1} \Big/ \left[ \breve{\Sigma}_{n+1}^- \right]_{11},
        \end{align}
    \end{subequations}
    where we used that $H(t_n) = I \otimes e_1$.
    With \cref{ass:inverse-of-gamma-cheap}, this computation costs \(O(d)\).

    \emph{(iv) Extrapolate covariance:}
    Assume a covariance of the form
    \(C_n = \left( \gamma_n^2 \breve{\Gamma} \right) \otimes \breve{C}_n\).
    Since scalars can be moved between the Kronecker factors, the covariance matrix can be written with the diffusion matrix
    \(\Gamma_{n+1} = \gamma_{n+1}^2 \breve{\Gamma}\),
    as
    \begin{equation}
        C_n = \left( \gamma_{n+1}^2 \breve{\Gamma} \right) \otimes \left(\frac{\gamma_{n}^2}{\gamma_{n+1}^2} \breve{C}_n \right).
    \end{equation}
    Then, since
    \(\Sigma_{n+1} = \left( \gamma_{n+1}^2 \breve{\Gamma} \right) \otimes \breve{\Sigma}_{n+1}\),
    the prediction step can be written as
    \begin{subequations}
        \begin{align}
            C^-_{n+1} & = \Phi(h_{n+1}) C_{n} \Phi(h_{n+1})^\top + \Sigma(h_{n+1})                  \\
                      & =
            \left( I_d \otimes \breve{\Phi}(h_{n+1}) \right)
            \left(\left( \gamma_{n+1}^2 \breve{\Gamma} \right) \otimes \left(\frac{\gamma_{n}^2}{\gamma_{n+1}^2} \breve{C}_n \right)\right)
            \left( I_d \otimes \breve{\Phi}(h_{n+1}) \right)^\top
            +
            \left(\left( \gamma_{n+1}^2 \breve{\Gamma} \right) \otimes \breve{\Sigma}_{n+1} \right) \\
            % &=
            %   \left( \gamma_n^2 \breve{\Gamma} \right) \otimes \left(\breve{\Phi}(h_{n+1}) \breve{C}_n \breve{\Phi}(h_{n+1})^\top \right)
            %   + \left( \gamma_{n+1}^2 \breve{\Gamma} \right) \otimes \breve{\Sigma}_{n+1} \\
                      & =
            \left(\gamma_{n+1}^2 \breve{\Gamma} \right)
            \otimes \left(
            \frac{\gamma_{n}^2}{\gamma_{n+1}^2}
            \breve{\Phi}(h_{n+1}) \breve{C}_n \breve{\Phi}(h_{n+1})^\top +
            \breve{\Sigma}_{n+1} \right)                                                            \\
                      & =:
            \left(\gamma_{n+1}^2 \breve{\Gamma} \right)
            \otimes \breve{C}_{n+1}^-.
        \end{align}
    \end{subequations}
    With a Kalman gain of the form \(K = I \otimes \breve{K}\), the corrected covariance can be written as
    \({C_{n+1} = ( \gamma_{n+1}^2 \breve{\Gamma} ) \otimes \breve{C}_{n+1}}\),
    thus confirming our assumption on the Kronecker structure of covariance matrices.
    Since all matrix multiplications happen only ``in the right Kronecker factor'', extrapolating the covariance costs \(O(\nu^3)\).

    All other parts of the proof can be reproduced as in \cref{sec:proof-complexity-kronecker} to obtain the specified complexities.
\end{proof}

\section{ODE PROBLEMS}
\label{sec:all-the-odes}

\subsection{Lorenz96}
The Lorenz96 model describes a chaotic dynamical system for which the dimension can be chosen freely \citep{lorenz1996predictability}.
It is given by a system of \(N \geq 4\) ODEs
\begin{subequations}
    \begin{align}
        \dot{y}_1 & = (y_{2} - y_{N-1}) y_{N} - y_1 + F,                              \\
        \dot{y}_2 & = (y_{3} - y_{N}) y_{1} - y_2 + F,                                \\
        \dot{y}_i & = (y_{i+1} - y_{i-2}) y_{i-1} - y_i + F \qquad i = 3, \dots, N-1, \\
        \dot{y}_N & = (y_{1} - y_{N-2}) y_{N-1} - y_N + F,
    \end{align}
\end{subequations}
with forcing term \(F=8\), initial values \(y_1(0) = F + 0.01\) and \(y_{>1}(0) = F\), and time span \(t \in [0, 30]\).

\subsection{Pleiades}
The Pleiades system describes the motion of seven stars in a plane,
with coordinates \((x_i, y_i)\) and masses \(m_i = i\), \(i=1, \dots, 7\)
\citep[Section II.10]{hairer1993solving}.
It can be described with a system of \(28\) ODEs
\begin{subequations}
    \begin{align}
        \dot{x}_i & = v_i                                       \\
        \dot{y}_i & = w_i                                       \\
        \dot{v}_i & = \sum_{j \neq i} m_j (x_j - x_i) / r_{ij}, \\
        \dot{w}_i & = \sum_{j \neq i} m_j (y_j - y_i) / r_{ij},
    \end{align}
\end{subequations}
where \(r_{ij} = \left( (x_i - x_j)^2 + (y_i - y_j)^2 \right)^{3/2}\), for \(i,j=1,\dots,7\).
It is commonly solved on the time span \(t \in [0, 3]\) and with initial locations
\begin{subequations}
    \begin{align}
        x(0) & = [3,3,-1,-3,2,-2,2],    \\
        y(0) & = [3,-3,2,0,0,-4,4],     \\
        v(0) & = [0,0,0,0,0,1.75,-1.5], \\
        w(0) & = [0,0,0,-1.25,1,0,0].
    \end{align}
\end{subequations}

\subsection{FitzHugh--Nagumo PDE}
Let $\Delta = \sum_{i=1}^d \pderiv{^2}{x_i^2}$ be the Laplacian.
The FitzHugh--Nagumo partial differential equation (PDE) is  \citep{ambrosio2009propagation}
\begin{subequations}
    \begin{align}
        \pderiv{}{t} u(t,x) & = a \Delta u(t,x) + u(t,x) - u(t,x)^3 - v(t,x) + k, \\
        \pderiv{}{t}v(t,x)  & = \frac{1}{\tau}(b \Delta v(t,x) + u(t,x) - v(t,x))
    \end{align}
\end{subequations}
for $x \in [0,1] \times [0,1] \subseteq \Rbb^2$, some parameters $a,b,k,\tau$, and initial values $u(t_0, x) = h_0(x)$, $v(t_0, x) = h_1(x)$.
In our experiments, we chose $a = 208 \cdot 10^{-4}, ~ b=5 \cdot 10^{-3}, ~ k=-5 \cdot 10^{-3}, ~ \tau=0.1.$
As initial values, we used random samples from the uniform distribution on $(0,1)$.
We solve it from $t_0=0$ to $t_\text{max}=20$.
To turn the PDE into a system of ODEs, we discretised the Laplacian with central, second-order finite differences schemes on a uniform grid. The mesh size of the grid determines the number of grid points, which controls the dimension of the ODE problem.

\subsection{Van der Pol}
The Van der Pol system is often employed to evaluate the stability of stiff ODE solvers \citep{wanner1996solving}.
It is given by a system of ODEs
\begin{equation}
    \dot{y_1}(t) = y_2(t), \qquad
    \dot{y_2}(t) = \mu \left( \left( 1-y_1^2(t) \right) y_2(t) - y_1(t) \right),%
\end{equation}
with stiffness constant \(\mu > 0\), time span \(t \in [0, 6.3]\), and initial value \(y(0) = [2, 0]\).

\end{document}